\documentclass{alt2019}

\usepackage{amsmath}
\usepackage{wrapfig}

\usepackage{xcolor}
\usepackage{hyperref}
\hypersetup{
  colorlinks   = true,
  urlcolor     = blue,
  linkcolor    = blue,
  citecolor    = blue
}

\definecolor{darkgreen}{rgb}{.0,.5,0}
\definecolor{brown}{rgb}{0.43,0.21,0.1}

\newcommand{\Blue}[1]{\textcolor{blue}{#1}}
\newcommand{\Red}[1]{\textcolor{red}{#1}}
\newcommand{\Green}[1]{\textcolor{darkgreen}{#1}}
\newcommand{\Gray}[1]{\textcolor{gray}{#1}}


\usepackage[mathscr]{euscript}

\newcommand{\scrA}{{\mathscr A}}
\newcommand{\scrL}{{\mathscr L}}
\newcommand{\scrP}{{\mathscr P}}
\newcommand{\scrT}{{\mathscr T}}
\newcommand{\scrU}{{\mathscr U}}
\newcommand{\scrV}{{\mathscr V}}
\newcommand{\scrW}{{\mathscr W}}

\renewcommand{\vec}[1]{\mathbf{\boldsymbol{#1}}}

\newcommand{\gvec}{\vec{g}}
\newcommand{\vvec}{\vec{v}}
\newcommand{\wvec}{\vec{w}}
\newcommand{\wvechat}{\widehat{\wvec}}

\newcommand{\gtil}{\widetilde{g}}


\newcommand{\RR}{\mathbb{R}} 

\DeclareMathOperator*{\argmin}{argmin}

\usepackage{tikz}
\usetikzlibrary{arrows.meta, quotes, automata,positioning, decorations, decorations.pathreplacing, decorations.markings}

\usepackage{float} 
\usepackage{graphicx}
\usepackage[indention=0pt]{caption}
\usepackage[leftcaption]{sidecap}

\usepackage[noend]{algpseudocode}
\usepackage{algorithm}

\usepackage{cancel} 
\usepackage{enumitem} 
\newcommand{\ignore}[1]{} 
\renewcommand{\set}[2][]{#1 \{ #2 #1 \} }
\newcommand{\out}{\text{out}}

\usepackage{amssymb,amsmath}
\usepackage{mathtools} 

\newenvironment{customthm}[1]
  {\innercustomthm}
  {\endinnercustomthm}
\newenvironment{customlemma}[1]
  {\innercustomlemma}
  {\endinnercustomlemma}
\newenvironment{customproposition}[1]
  {\innercustomproposition}
  {\endinnercustomproposition}

\newcommand{\EXP}{\textsc{EXP3}}
\newcommand{\EXPAG}{\textsc{EXP3-AG}}
\newcommand{\Colon}{\colon\!}

\firstpageno{1}

\title[Online Non-Additive Path Learning]{Online Non-Additive Path Learning \\ 
under Full and Partial Information}

\coltauthor{\Name{Corinna Cortes} \Email{corinna@google.com}\\
 \addr Google Research, New York, NY
 \AND
 \Name{Vitaly Kuznetsov} \Email{vitalyk@google.com}\\
 \addr Google Research, New York, NY
 \AND
 \Name{Mehryar Mohri} \Email{mohri@cims.nyu.edu}\\
 \addr Google Research and Courant Institute, New York, NY
 \AND
 \Name{Holakou Rahmanian}\thanks{Research done in part while the author was visiting Courant Institute and interning at Google Research, NYC as a Ph.D.\ student from UCSC.} \Email{holakou@microsoft.com}\\
 \addr Microsoft Corporation, Redmond, WA
 \AND
 \Name{Manfred K. Warmuth} \Email{manfred@ucsc.edu}\\
 \addr Google Inc., Z\"{u}rich and UC Santa Cruz, CA
 }



\begin{document}

\maketitle

\begin{abstract}

  We study the problem of online path learning with non-additive
  gains, which is a central problem appearing in several applications,
  including ensemble structured prediction. We present new online
  algorithms for path learning with non-additive count-based gains for
  the three settings of full information, semi-bandit and full
  bandit with very favorable regret guarantees. A key component of
  our algorithms is the definition and computation of an intermediate
  context-dependent automaton that enables us to use existing
  algorithms designed for additive gains.  We further apply our
  methods to the important application of ensemble structured
  prediction.  Finally, beyond count-based gains, we give an efficient
  implementation of the EXP3 algorithm for the full bandit setting
  with an arbitrary (non-additive) gain.

\end{abstract}

\begin{keywords}
online learning, non-additive gains, finite-state automaton
\end{keywords}


\section{Introduction}
\label{sec:intro}

One of the core combinatorial online learning problems is that of
learning a minimum loss path in a directed graph.  Examples can be
found in structured prediction problems such as machine translation,
automatic speech recognition, optical character recognition and
computer vision.  In these problems, predictions (or predictors) can
be decomposed into possibly overlapping substructures that may
correspond to words, phonemes, characters, or image patches. They can
be represented in a directed graph where each edge represents a
different substructure.

The number of paths, which serve as \emph{experts}, is typically
exponential in the size of the graph.  Extensive work has been done to
design efficient algorithms when the loss is \emph{additive}, that is
when the loss of the path is the sum of the losses of the edges along
that path.  Several efficient algorithms with favorable guarantees
have been designed both for the full information setting
\citep{TakimotoWarmuth2003, KalaiVempala2005,
  KoolenWarmuthKivinen2010} and different bandit settings
\citep{gyorgy2007line, cesa2012combinatorial} by exploiting the
additivity of the loss.

However, in modern machine learning applications such as machine
translation, speech recognition and computational biology, the loss of
each path is often not additive in the edges along the path.
For instance, in machine translation, the BLEU score similarity
determines the loss. The BLEU score can be closely approximated by the
inner product of the count vectors of the $n$-gram occurrences in two
sequences, where typically $n = 4$ (see Figure~\ref{fig:no-add-ex}).
In computational biology tasks, the losses are determined based on the
inner product of the (discounted) count vectors of occurrences of
$n$-grams with gaps (gappy $n$-grams).
In other applications, such as speech recognition and optical
character recognition, the loss is based on the edit-distance.
Since the performance of the algorithms in these applications is
measured via non-additive loss functions, it is natural to seek
learning algorithms optimizing these losses directly. This motivates
our study of online path learning for non-additive losses.

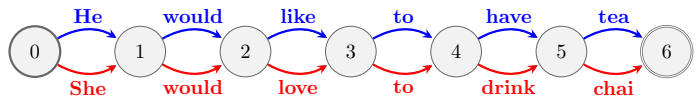
\begin{figure}
\centering
\scalebox{.7}{

\begin{tikzpicture}%
  [>=stealth,
   shorten >=1pt,
   node distance=2cm,
   on grid,
   auto,
   every state/.style={draw=black!60, fill=black!5 }
  ]
  
\tikzset{alg1/.style = {->,> = stealth, very thick, blue}}
\tikzset{alg2/.style = {->,> = stealth, very thick, red}}



\node[state, very thick] (0)                  {0};
\node[state] (1) [right=of 0] {1};
\node[state] (2) [right=of 1] {2};
\node[state] (3) [right=of 2] {3};
\node[state] (4) [right=of 3] {4};
\node[state] (5) [right=of 4] {5};
\node[state, accepting] (6) [right=of 5] {6};

\path[->]
	(0) edge[bend left]	node {\Blue{\textbf{He}}} (1)
		  edge[bend right]	node [below] {\Red{\textbf{She}}} (1)
	(1) edge[bend left]	node {\Blue{\textbf{would}}} (2)
		  edge[bend right]	node [below] {\Red{\textbf{would}}} (2)
	(2) edge[bend left]	node {\Blue{\textbf{like}}} (3)
		  edge[bend right]	node [below] {\Red{\textbf{love}}} (3)
	(3) edge[bend left]	node {\Blue{\textbf{to}}} (4)
		  edge[bend right]	node [below] {\Red{\textbf{to}}} (4)
	(4) edge[bend left]	node {\Blue{\textbf{have}}} (5)
		  edge[bend right]	node [below] {\Red{\textbf{drink}}} (5)
	(5) edge[bend left]	node {\Blue{\textbf{tea}}} (6)
		  edge[bend right]	node [below] {\Red{\textbf{chai}}} (6)
;

\draw[alg1] (0) [bend left] to (1);
\draw[alg1] (1) [bend left] to (2);
\draw[alg1] (2) [bend left] to (3);
\draw[alg1] (3) [bend left] to (4);
\draw[alg1] (4) [bend left] to (5);
\draw[alg1] (5) [bend left] to (6);

\draw[alg2] (0) [bend right] to (1);
\draw[alg2] (1) [bend right] to (2);
\draw[alg2] (2) [bend right] to (3);
\draw[alg2] (3) [bend right] to (4);
\draw[alg2] (4) [bend right] to (5);
\draw[alg2] (5) [bend right] to (6);

\end{tikzpicture}
}
\caption{
  Combining outputs of two different translators (\Blue{blue} and
  \Red{red}).  There are $64$ interleaved translations represented as
  paths.  The BLEU score measures the overlap in $n$-grams between
  sequences. Here, an example of a $4$-gram is
  ``\Blue{like}-\Blue{to}-\Red{drink}-\Blue{tea}''.}
\label{fig:no-add-ex}
\end{figure}

One of the applications of our algorithm is \emph{ensemble structured
  prediction}.  Online learning of ensembles of structured prediction
experts can significantly improve the performance of algorithms in a
number of areas including machine translation, speech recognition,
other language processing areas, optical character recognition, and
computer vision \citep{CortesKuznetsovMohri2014}.  In general,
ensemble structured prediction is motivated by the fact that one
particular expert may be better at predicting one substructure while
some other expert may be more accurate at predicting another
substructure.  Therefore, it is desirable to interleave the
substructure predictions of all experts to obtain the more accurate
prediction.  This application becomes important, particularly in the
bandit setting.  Suppose one wishes to combine the outputs of
different translators as in Figure~\ref{fig:no-add-ex}.  Instead of
comparing oneself to the outputs of the best translator, the
comparator is the best ``interleaved translation'' where each word in
the translation can come from a different translator.  However,
computing the loss or the gain (such as BLEU score) of each path can
be costly and may require the learner to resort to learning from
partial feedback only.

Online path learning with non-additive losses has been previously
studied by \cite{CortesKuznetsovMohriWarmuth2015}. That work focuses
on the full information case providing an efficient implementations of
Expanded Hedge \citep{TakimotoWarmuth2003} and
Follow-the-Perturbed-Leader \citep{KalaiVempala2005} algorithms under
some technical assumptions on the outputs of the experts.

In this paper, we design algorithms for online path learning with
non-additive gains or losses in the full information, as well as in
several bandit settings specified in detail in
Section~\ref{sec:setup}.  In the full information setting, we design
an efficient algorithm that enjoys regret guarantees that are more
favorable than those of \cite{CortesKuznetsovMohriWarmuth2015}, while
not requiring any additional assumption. In the bandit settings, our
algorithms, to the best of our knowledge, are the first efficient
methods for learning with non-additive losses.

The key technical tools used in this work are weighted automata and
transducers \citep{Mohri2009}.  We transform the original path graph
$\scrA$ (e.g.\ Figure~\ref{fig:no-add-ex}) into an intermediate graph
$\scrA'$.  The paths in $\scrA$ are mapped to the paths in $\scrA'$,
but now the losses in $\scrA'$ are additive along the paths.
Remarkably, the size of $\scrA'$ does not depend on the size of the
alphabet (word vocabulary in translation tasks) from which the output
labels of edges are drawn.  The construction of $\scrA'$ is highly
non-trivial and is our primary contribution.  This alternative graph
$\scrA'$, in which the losses are additive, enables us to extend many
well-known algorithms in the literature to the path learning problem.

The paper is organized as follows.  We introduce the path learning
setup in Section~\ref{sec:setup}.  In Section~\ref{sec:count-gains},
we explore the wide family of non-additive count-based gains and
introduce the alternative graph $\scrA'$ using automata and
transducers tools.  We present our algorithms in
Section~\ref{sec:algs} for the full information, semi- and full bandit
settings for the count-based gains.  Next, we extend our results to
\emph{gappy} count-based gains in Section~\ref{sec:gappy-gains}.  
The application of our method to the ensemble structured prediction is
detailed in Appendix~\ref{app:ensemble}.  In Appendix~\ref{sec:exp3},
we go beyond count-based gains and consider \emph{arbitrary}
(non-additive) gains.  Even with no assumption about the structure of
the gains, we can efficiently implement the EXP3 algorithm in the full
bandit setting. Naturally, the regret bounds for this algorithm are
weaker, however, since no special structure of the gains can be
exploited in the absence of any assumption.

\section{Basic Notation and Setup}
\label{sec:setup}

We describe our path learning setup in terms of finite automata.  Let
$\scrA$ denote a fixed acyclic finite automaton.  We call $\scrA$ the
\emph{expert automaton}. $\scrA$ admits a single \emph{initial state}
and one or several \emph{final states} which are indicated by bold and
double circles, respectively, see Figure~\ref{fig:experts}(a). Each
transition of $\scrA$ is labeled with a unique \emph{name}.  Denote
the set of all transition names by $E$. An automaton with a single
initial state is \emph{deterministic} if no two outgoing transitions
from a given state admit the same name. Thus, our automaton $\scrA$ is
deterministic by construction since the transition names are unique.
An \emph{accepting path} is a sequence of transitions from the initial
state to a final state.  The expert automaton $\scrA$ can be viewed as
an indicator function over strings in $E^*$ such that
$\scrA( \pi )\!=\!1 $ iff $\pi$ is an accepting path. Each accepting
path serves as an expert and we equivalently refer to it as a
\emph{path expert}. The set of all path experts is denoted by $\scrP$.
At each round $t = 1, \ldots, T$, each transition $e \in E$ outputs a
\emph{symbol} from a finite non-empty alphabet $\Sigma$, denoted by
$\text{out}_t(e) \in \Sigma$.  The \emph{prediction} of each path
expert $\pi \in E^*$ at round $t$ is the sequence of output symbols
along its transitions at that round and is denoted by
$\text{out}_t(\pi) \in \Sigma^*$.  We also denote by
$\text{out}_t(\scrA)$ the automaton with the same topology as $\scrA$
where each transition $e$ is labeled with $\text{out}_t(e)$, see
Figure~\ref{fig:experts}(b).
At each round $t$, a \emph{target sequence} $y_t \in \Sigma^*$ is
presented to the learner.  The \emph{gain/loss} of each path expert
$\pi$ is $\scrU(\text{out}_t(\pi), y_t)$ where
$\scrU\colon \Sigma^* \times \Sigma^* \longrightarrow \mathbb{R}_{\geq
  0}$.  Our focus is the $\scrU$ functions that are not necessarily
additive along the transitions in $\scrA$.  For example, $\scrU$ can
be either a \emph{distance function} (e.g. edit-distance) or a
\emph{similarity function} (e.g. $n$-gram gain with $n\geq 2$).

\input{a2.fig}
\input{a3.fig}

We consider standard online learning scenarios of prediction with path
experts.  At each round $t\!\in\![T]$, the \emph{learner} picks a path
expert $\pi_t$ and predicts with its prediction $\text{out}_t(\pi_t)$.
The learner receives the gain of $\scrU(\text{out}_t(\pi_t),y_t)$.
Depending on the setting, the \emph{adversary} may reveal some
information about $y_t$ and the output symbols of the transitions (see
Figure~\ref{fig:games}).  In the \emph{full information} setting,
$y_t$ and $\text{out}_t(e)$ are revealed to the learner for every
transition $e$ in $\scrA$.  In the \emph{semi-bandit} setting, the
adversary reveals $y_t$ and $\text{out}_t(e)$ for every transition $e$
along $\pi_t$.  In \emph{full bandit} setting,
$\scrU(\text{out}_t(\pi_t),y_t)$ is the only information that is
revealed to the learner.  The goal of the learner is to minimize the
\emph{regret} which is defined as the cumulative gain of the best path
expert chosen in hindsight minus the cumulative expected gain of the
learner.

\section{Count-Based Gains}
\label{sec:count-gains}

Many of the most commonly used non-additive gains in applications
belong to the broad family of \emph{count-based gains}, which are
defined in terms of the number of occurrences of a fixed set of
patterns, $\theta_1, \theta_2, \ldots, \theta_p$, in the sequence
output by a path expert. These patterns may be $n$-grams, that is
sequences of $n$ consecutive symbols, as in a common approximation of
the BLEU score in machine translation, a set of relevant subsequences
of variable-length in computational biology, or patterns described by
complex regular expressions in pronunciation
modeling. 

For any sequence $y \in \Sigma^*$, let $\Theta(y) \in \RR^p$ denote
the vector whose $k$th component is the number of occurrences
of $\theta_k$ in $y$, $k \in [p]$.\footnote{This can be extended to
  the case of weighted occurrences where more emphasis is assigned to
  some patterns $\theta_k$ whose occurrences are then multiplied by a
  factor $\alpha_k > 1$, and less emphasis to others.}  The
count-based gain function $\scrU$ at round $t$ for a path expert $\pi$
in $\scrA$ given the target sequence $y_t$ is then defined as a dot
product:
\begin{equation}
\label{eq:gain}
\scrU(\text{out}_t(\pi), y_t) := \Theta(\text{out}_t(\pi)) \cdot \Theta(y_t) \geq 0.
\end{equation}
Such gains are not additive along the transitions and 
the standard online path learning algorithms for additive gains cannot
be applied.
Consider, for example, the special case of $4$-gram-based gains in Figure~\ref{fig:no-add-ex}.
These gains cannot be expressed additively if the target sequence is, for instance,
``He would like to eat cake'' (see Appendix~\ref{app:non-additive-example}).
%
%
The challenge of learning with non-additive gains is even more
apparent in the case of \emph{gappy} count-based gains which allow for
gaps of varying length in the patterns of interest.  We defer the
study of gappy-count based gains to Section~\ref{sec:gappy-gains}.

How can we design algorithms for online path learning with such
non-additive gains?  Can we design algorithms with favorable regret
guarantees for all three settings of full information, semi- and full
bandit?  The key idea behind our solution is to design a new automaton
$\scrA'$ whose paths can be identified with those of $\scrA$ and,
crucially, whose gains are additive.  We will construct $\scrA'$ by
defining a set of \emph{context-dependent rewrite rules}, which can be
compiled into a finite-state transducer $T_{\scrA}$ defined below. The
\emph{context-dependent automaton $\scrA'$} can then be obtained by
composition of the transducer $T_{\scrA}$ with $\scrA$.  In addition
to playing a key role in the design of our algorithms
(Section~\ref{sec:algs}), $\scrA'$ provides a compact representation
of the gains since its size is substantially less than the dimension
$p$ (number of patterns).

\vspace{-.1cm}

\subsection{Context-Dependent Rewrite Rules}
\label{subsec:cdrr}

We will use \emph{context-dependent rewrite rules} to map $\scrA$ to
the new representation $\scrA'$. These are rules that admit the
following general form:
\begin{equation*}
\phi \rightarrow \psi \slash \lambda \underline{~~~~~} \rho,
\end{equation*}
where $\phi$, $\psi$, $\lambda$, and $\rho$ are regular expressions
over the alphabet of the rules.
These rules must be interpreted as
follows: $\phi$ is to be replaced by $\psi$ whenever it is preceded by
$\lambda$ and followed by $\rho$. Thus, $\lambda$ and $\rho$ represent
the left and right contexts of application of the rules. Several types
of rules can be considered depending on their being obligatory or
optional, and on their direction of application, from left to right,
right to left or simultaneous application \citep{KaplanKay1994}.
We will be only considering rules with simultaneous applications.
\input{ta.fig}
Such context-dependent rules can be efficiently compiled into a
\emph{finite-state transducer} (FST), under the technical condition
that they do not rewrite their non-contextual part
\citep{MohriSproat1996,KaplanKay1994}.\footnote{Additionally, the
  rules can be augmented with weights, which can help us cover the
  case of weighted count-based gains, in which case the result of the
  compilation is a weighted transducer \citep{MohriSproat1996}. Our
  algorithms and theory can be extended to that case.}  An FST $\scrT$
over an input alphabet $\Sigma$ and output alphabet $\Sigma'$ defines
an indicator function over the pairs of strings in
$\Sigma^* \times \Sigma'^*$.  Given $x \in \Sigma^*$ and
$y \in \Sigma'^*$, we have $\scrT(x, y) = 1$ if there exists a path
from an initial state to a final state with input label $x$ and output
label $y$, and $\scrT(x, y) = 0$ otherwise.

To define our rules, we first introduce the alphabet $E'$ 
as the set of transition names for the target automaton
$\scrA'$. These  capture all possible contexts of length
$r$, where $r$ is the length of pattern $\theta_k$:
\begin{equation*}
E' = \set[\Big]{ \#e_1 \cdots e_r  \mid e_1 \cdots e_r \text{ is a path segment of
    length $r$ in
    $\scrA$}, r \in  \set[\big]{ |\theta_1|, \ldots ,|\theta_p| } },
\end{equation*}
where the `\#' symbol ``glues'' $e_1, \ldots, e_r \in E$ together and forms one single symbol in $E'$.
We will have one context-dependent rule of the following form for each
element $\#e_1 \cdots e_r \in E'$:
\begin{equation}
\label{eq:rules}
e_1 \cdots e_r \rightarrow \#e_1 \cdots e_r \slash \epsilon
\underline{~~~~~} \epsilon .
\end{equation}
Thus, in our case, the left- and right-contexts are the empty strings%
\footnote{
Context-dependent rewrite rules are powerful tools for identifying different patterns
using their left- and right-contexts. For our application of count-based gains, however, 
identifying these patterns are independent of their context and 
we do not need to fully exploit the strength of these rewrite rules.
}%
,
meaning that the rules can apply (simultaneously) at every
position. In the special case where the patterns $\theta_k$ are the
set of $n$-grams, then $r$ is fixed and equal to $n$.
Figure~\ref{fig:ex_T_A} shows the result of the rule compilation in
that case for $n = 2$. This transducer inserts $\#e_1e_3$ whenever
$e_1$ and $e_3$ are found consecutively and otherwise outputs the empty string.
We will denote the resulting FST by $T_\scrA$.


\subsection{Context-Dependent Automaton $\scrA'$}
\label{subsec:additiveA}

To construct the context-dependent automaton $\scrA'$, we will use the
\emph{composition} operation.
The composition of $\scrA$ and $T_\scrA$ is an FST denoted by
$\scrA \circ T_\scrA$ and defined as the following product of 
two $0/1$ outcomes for all inputs:
\begin{equation*}
\forall x \in E^*, \; \forall y \in E'^*\Colon \quad (\scrA \circ T_\scrA)(x, y) := \scrA(x) \cdot T_\scrA(x, y).
\end{equation*}
There is an efficient algorithm for the composition of FSTs and
automata \citep{PereiraRiley1997,MohriPereiraRiley1996,Mohri2009},
whose worst-case complexity is in $O(|\scrA| \, |T_\scrA|)$.  The
automaton $\scrA'$ is obtained from the FST $(\scrA \circ T_\scrA)$ by
\emph{projection}, that is by simply omitting the input label of each
transition and keeping only the output label. Thus if we denote by
$\Pi$ the projection operator, then $\scrA'$ is defined as
$\;\scrA' = \Pi(\scrA \circ T_\scrA).$

Observe that $\scrA'$ admits a fixed topology (states and transitions)
at any round $t \in [T]$. It can be constructed in a pre-processing
stage using the FST operations of composition and
projection. Additional FST operations such as $\epsilon$-removal and
minimization can help further optimize the automaton obtained after
projection \citep{Mohri2009}.
\ignore{
Notice that the transducer $T_\scrA$ of Figure~\ref{fig:ex_T_A}(b) is
\emph{deterministic}, that is no two transitions leaving a state admit
the same input label. More generally, in the case where the patterns
$\theta_k$ are $n$-grams, $T_\scrA$ can be constructed to be
deterministic without increasing its size: the number of transitions
coincides with the number of elements in $E'$. 

Moreover, the result of the rule compilation can be determinized using
transducer determinization \citep{Mohri2009}.  If determinized,
$T_\scrA$ assigns a unique output to a given path expert in $\scrA$.}
Proposition~\ref{prop:1-1}, proven in Appendix~\ref{app:proofs},
ensures that for every accepting path $\pi$ in $\scrA$, there is a
unique corresponding accepting path in $\scrA'$.  Figure~\ref{fig:aa'}
shows the automata $\scrA$ and $\scrA'$ in a simple case and how a
path $\pi$ in $\scrA$ is mapped to another path $\pi'$ in $\scrA'$.

\begin{proposition}
\label{prop:1-1}
Let $\scrA$ be an expert automaton and let $T_\scrA$ be a
deterministic transducer representing the rewrite
rules~\eqref{eq:rules}.  Then, for each accepting path $\pi$ in
$\scrA$, there exists a unique corresponding accepting path $\pi'$ in
$\scrA' = \Pi ( \scrA \circ T_\scrA )$.
\end{proposition}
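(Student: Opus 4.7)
The plan is to chain together three ingredients: (i) transition names in $\scrA$ are unique, so $\scrA$ is deterministic when read over $E$; (ii) the compiled rewrite transducer $T_\scrA$ is deterministic and total on $E^*$ with every state accepting; and (iii) the FST composition $\scrA\circ T_\scrA$ followed by projection $\Pi$ is a transition-by-transition construction which preserves underlying path structure.

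First, I fix an accepting path $\pi = e_1 e_2 \cdots e_m$ in $\scrA$ and consider its transition-name sequence $x = e_1 \cdots e_m \in E^*$. Because $T_\scrA$ admits an outgoing transition for every $e \in E$ at each state and every state is final (as illustrated in Figure~\ref{fig:ex_T_A}(b) for the bigram case, and guaranteed in general by the standard compilation of the context-independent rewrite rules~\eqref{eq:rules} whose left and right contexts are empty), reading $x$ in $T_\scrA$ yields exactly one accepting path $\tau$, producing a unique output string $y \in E'^*$.

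Next, I invoke the semantics of FST composition: accepting paths of $\scrA \circ T_\scrA$ are in bijection with synchronized pairs $(\pi_\scrA,\tau)$ of accepting paths in $\scrA$ and $T_\scrA$ whose $E$-labels coincide. Since $\pi$ is the only transition sequence in $\scrA$ whose labels spell $x$ (unique naming) and $\tau$ is the only path in $T_\scrA$ reading $x$ (determinism), the pair $(\pi,\tau)$ determines a single composition path $\widehat\pi$. Applying $\Pi$ is a purely transition-wise relabeling: each transition of $\widehat\pi$ keeps its source, destination, and output symbol, discarding only the input symbol. Consequently $\widehat\pi$ projects to a single, well-defined accepting path $\pi' = \Pi(\widehat\pi)$ in $\scrA'$, and no other accepting path of $\scrA'$ can be obtained from $\pi$ via this construction.

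The main obstacle is establishing determinism and totality of $T_\scrA$. For $n$-grams this is immediate from the construction visible in Figure~\ref{fig:ex_T_A}(b), where the states record the last $n-1$ symbols read and every state carries a full outgoing fan-out with $\#$-emissions on the relevant transitions. In the general count-based setting, one needs to appeal to the known compilation result that simultaneous-application rewrite rules not rewriting their non-contextual part, as is the case for~\eqref{eq:rules}, compile into a deterministic transducer which is total over $E^*$ and in which every state is final. Once this is in place, the rest of the argument is purely formal from the composition and projection constructions.
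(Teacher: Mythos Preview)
Your proof is correct and follows essentially the same approach as the paper: determinism of $T_\scrA$ yields a unique output for each accepting $\pi$, and the definitions of composition and projection then make this output an accepting path of $\scrA'$. If anything, your argument is more careful than the paper's, which does not explicitly address totality (existence of an output for every $\pi$) and works at the level of label strings rather than paths.
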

\input{aap.fig}
The size of the context-dependent automaton $\scrA'$ depends on the
expert automaton $\scrA$ and the lengths of the patterns.
Notice that, crucially, its size is independent of the size of the
alphabet $\Sigma$.  Appendix~\ref{app:ensemble} analyzes more
specifically the size of $\scrA'$ in the important application of
ensemble structure prediction with $n$-gram gains.

At any round $t\!\in\![T]$ and for any $\#e_1 \cdots e_r\!\in\!E'$,
let $\out_t(\#e_1 \cdots e_r)$ denote the sequence
$\out_t(e_1) \cdots \out_t(e_r)$, that is the sequence obtained by
concatenating the outputs of $e_1, \ldots, e_r$.  Let $\out_t(\scrA')$
be the automaton with the same topology as $\scrA'$ where each label
$e'\!\in\!E'$ is replaced by $\out_t(e')$.
Once $y_t$ is known, the representation $\Theta(y_t)$ can be found, and
consequently, the additive contribution of each transition of
$\scrA'$ can be computed.
The following theorem, which is proved in Appendix~\ref{app:proofs},
shows the additivity of the gains in $\scrA'$.
See Figure~\ref{fig:outaa'} for an example.

\begin{theorem}
\label{thm:additivity}
At any round $t \in [T]$, define the gain $g_{e'\!,t}$ of the
transition $e' \in E'$ in $\scrA'$ by
$g_{e'\!,t} := \left[ \Theta(y_t) \right]_k$ if
$\out_t(e') = \theta_k$ for some $k \in [p]$ and $g_{e'\!,t}:=0$ if no
such $k$ exists.  Then, the gain of each path $\pi$ in $\scrA$ at
trial $t$ can be expressed as an additive gain of the corresponding
unique path $\pi'$ in $\scrA'$:
\begin{equation*}
\forall t \in [T], \;
\forall \pi \in \scrP \Colon \quad
\scrU(\out_t(\pi), y_t)
= \sum_{e' \in \pi'} g_{e'\!,t} \;.
\end{equation*}
\end{theorem}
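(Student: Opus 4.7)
My plan is to unfold the definition of $\scrU$, re-express the count vector in terms of positions along $\pi$ where patterns match, and then use the construction of $T_\scrA$ (hence of $\scrA'$) to put those positions in bijection with the transitions of $\pi'$.

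I would start from the definition in~\eqref{eq:gain},
\begin{equation*}
\scrU(\out_t(\pi), y_t) \;=\; \Theta(\out_t(\pi)) \cdot \Theta(y_t) \;=\; \sum_{k=1}^{p} [\Theta(\out_t(\pi))]_k \, [\Theta(y_t)]_k,
\end{equation*}
and interpret $[\Theta(\out_t(\pi))]_k$ combinatorially: it is the number of positions $i$ in $\pi = e_{i_1} e_{i_2} \cdots e_{i_L}$ such that the sub-path $e_{i} e_{i+1} \cdots e_{i + r_k - 1}$ (where $r_k = |\theta_k|$) satisfies $\out_t(e_i)\out_t(e_{i+1})\cdots\out_t(e_{i+r_k-1}) = \theta_k$.

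Next, I would invoke the construction of $T_\scrA$ from the rewrite rules~\eqref{eq:rules}. Since the rules are applied \emph{simultaneously} at every position and their left and right contexts are empty, each sub-path $e_{i}\cdots e_{i+r-1}$ of $\pi$ of any pattern length $r \in \{|\theta_1|,\dots,|\theta_p|\}$ contributes exactly one transition labeled $\#e_i\cdots e_{i+r-1}$ to the output of $T_\scrA$ on input $\pi$, and these are all the output transitions. Combined with Proposition~\ref{prop:1-1}, which identifies $\pi'$ with this unique output, I get a bijection
\begin{equation*}
\{\,i : \out_t(e_i \cdots e_{i+r_k-1}) = \theta_k\,\} \;\longleftrightarrow\; \{\, e' \in \pi' : \out_t(e') = \theta_k \,\},
\end{equation*}
for every $k \in [p]$. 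Hence $[\Theta(\out_t(\pi))]_k = |\{e' \in \pi' : \out_t(e') = \theta_k\}|$.

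Substituting this back and swapping the order of summation,
\begin{equation*}
\scrU(\out_t(\pi), y_t) \;=\; \sum_{k=1}^{p} \sum_{\substack{e' \in \pi' \\ \out_t(e') = \theta_k}} [\Theta(y_t)]_k \;=\; \sum_{e' \in \pi'} g_{e'\!,t},
\end{equation*}
where the last equality uses the definition of $g_{e'\!,t}$ (note that each $e' \in \pi'$ matches at most one $\theta_k$ since $\out_t(e')$ is a fixed string, and transitions with no matching pattern contribute $0$). The main obstacle is the middle step: cleanly arguing that the simultaneous, context-free application of the rewrite rules, followed by composition with $\scrA$ and projection, produces a $\pi'$ whose $\#$-transitions are in exact one-to-one correspondence with the pattern occurrences along $\pi$. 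Everything else is bookkeeping around the inner product.
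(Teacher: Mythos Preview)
Your proposal is correct and follows essentially the same route as the paper: expand $\scrU$ as the inner product $\sum_k [\Theta(\out_t(\pi))]_k [\Theta(y_t)]_k$, identify $[\Theta(\out_t(\pi))]_k$ with $\sum_{e'\in\pi'}\mathbf{1}\{\out_t(e')=\theta_k\}$ via the rewrite-rule construction, and swap the order of summation. The paper's proof is slightly terser about the bijection step you flag as the main obstacle, simply asserting it ``by construction,'' but the argument is the same.
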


\ignore{
As an example, consider the automaton $\scrA$ and its associated
context-dependent automaton $\scrA'$ shown in Figure~\ref{fig:aa'},
with bigram gains and $\Sigma = \{ a, b \}$.  Here, the patterns are
$(\theta_1, \theta_2, \theta_3, \theta_4) = (aa, ab, ba, bb)$.  Let
the target sequence at trial $t$ be $y_t = aba$.  Thus
$\Theta(y_t)=[0, 1, 1, 0]^T$.  The automata $\out_t(\scrA)$ and
$\out_t(\scrA')$ are given in Figure~\ref{fig:outaa'}.
}

\input{outaap.fig}

\vspace{-.4cm}

\section{Algorithms}
\label{sec:algs}

In this section, we present algorithms and associated regret
guarantees for online path learning with non-additive count-based
gains in the full information, semi-bandit and full bandit
settings. The key component of our algorithms is the context-dependent
automaton $\scrA'$.
In what follows, we denote the length of the longest path in $\scrA'$
by $K$, an upper-bound on the gain of each transition in $\scrA'$ by
$B$, the number of path experts by $N$, and the number of transitions
and states in $\scrA'$ by $M$ and $Q$, respectively. We note that $K$
is at most the length of the longest path in $\scrA$ since each
transition in $\scrA'$ admits a unique label.
\paragraph{Remark.} 
The number of accepting paths in $\scrA'$ is often equal to but
sometimes less than the number of accepting paths in $\scrA$.  In some
degenerate cases, several paths $\pi_1, \ldots, \pi_k$ in $\scrA$ may
correspond to one single path%
\footnote{
For example, in the case of $n$-gram gains, 
all the paths in $\scrA$ with a length less than $n$
correspond to path with empty output in $\scrA'$
and will always have a gain of zero.
}
$\pi'$ in $\scrA'$.  This implies that
$\pi_1, \ldots, \pi_k$ in $\scrA$ will always consistently have the
same gains in every round and that is the additive gain of $\pi'$ in
$\scrA'$.  Thus, if $\pi'$ is predicted by the algorithm in $\scrA'$,
any of the paths $\pi_1, \ldots, \pi_k$ can be equivalently used for
prediction in the original expert automaton $\scrA$.

\subsection{Full Information: Context-dependent Component Hedge
  Algorithm}

\cite{KoolenWarmuthKivinen2010} gave an algorithm for online path
learning with non-negative additive losses in the full information
setting, the Component Hedge (CH) algorithm.  For count-based losses,
\cite{CortesKuznetsovMohriWarmuth2015} provided an efficient Rational
Randomized Weighted Majority (RRWM) algorithm. This algorithm requires
the use of determinization \citep{Mohri2009} which is only shown to
have polynomial computational complexity under some additional
technical assumptions on the outputs of the path experts. In this
section, we present an extension of CH, the \emph{Context-dependent
  Component Hedge} (CDCH), for the online path learning problem with
non-additive count-based gains. CDCH admits more favorable regret
guarantees than RRWM and can be efficiently implemented without any
additional assumptions.

Our CDCH algorithm requires a modification of $\scrA'$ such that all
paths admit an equal number $K$ of transitions (same as the longest
path).  This modification can be done by adding at most $(K-2)(Q-2)+1$
states and zero-gain transitions \citep{gyorgy2007line}. Abusing the
notation, we will denote this new automaton by $\scrA'$ in this
subsection.
At each iteration $t$, CDCH maintains a weight vector $\wvec_t$
in the unit-flow polytope $P$ over $\scrA'$, which is a set
of vectors $\wvec \in \RR^M$ satisfying the following conditions:
(1) the weights of the outgoing transitions from the initial state sum up to one, and
(2) for every non-final state, the sum of the weights of incoming and outgoing transitions are equal.
For each $t \in \{1, \ldots, T\}$, we observe the gain of each
transition $g_{t, e'}$, and define the loss of that transition as
$\ell_{e'} = B - g_{t, e'}$.  After observing the loss of each
transition $e'$ in $\scrA'$, CDCH updates each component of $\wvec$ as
$\widehat{w}(e') \leftarrow w_t(e') \, \exp( - \eta \, \ell_{t,e'})$
(where $\eta$ is a specified learning rate), and sets $\wvec_{t+1}$ to
the relative entropy projection of the updated $\widehat{\wvec}$ back
to the unit-flow polytope, i.e.\
$\wvec_{t+1} = \argmin_{\wvec \in P} \sum_{e' \in E'} w(e') \ln
\frac{w(e')}{\widehat{w}(e')} + \widehat{w}(e') - w(e') $.

CDCH predicts by decomposing $\wvec_t$ into a convex combination of at
most $|E'|$ paths in $\scrA'$ and then sampling a single path
according to this mixture as described below.  Recall that each path in
$\scrA'$  identifies a path in $\scrA$ which can be recovered
in time $K$. Therefore, the inference step of the CDCH algorithm takes
at most time polynomial in $|E'|$ steps.  To determine a
decomposition, we find a path from the initial state to a final state
with non-zero weights on all transitions, remove the largest weight on
that path from each transition on that path and use it as a mixture
weight for that path. The algorithm proceeds in this way until the
outflow from initial state is zero.  The following theorem from
\citep{KoolenWarmuthKivinen2010} gives a regret guarantee for the CDCH
algorithm.

\begin{theorem}
\label{th:nch}
With proper tuning of the learning rate $\eta$, the regret of CDCH is
bounded as below:
\[
  \forall \; \pi^* \in \scrP \Colon \quad \sum_{t=1}^n
  \scrU(\out_t(\pi^*), y_t) - \scrU(\out_t(\pi_t), y_t) \leq \sqrt{2
    \, T \, B^2 K^2 \, \log (K \, M)} + B \, K \, \log (K M).
\]
\end{theorem}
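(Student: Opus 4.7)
The plan is to reduce the guarantee to the standard Component Hedge regret bound of \citep{KoolenWarmuthKivinen2010} applied to the padded automaton $\scrA'$, using Theorem~\ref{thm:additivity} and Proposition~\ref{prop:1-1} to bridge back to the non-additive gain on $\scrA$.

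First, I would translate the gain problem on $\scrA$ into an additive loss problem on $\scrA'$. By Theorem~\ref{thm:additivity}, for any round $t$ and any path $\pi \in \scrP$ identified with $\pi'$ in $\scrA'$,
\begin{equation*}
\scrU(\out_t(\pi), y_t) = \sum_{e' \in \pi'} g_{t,e'}.
\end{equation*}
Setting $\ell_{t,e'} = B - g_{t,e'} \in [0,B]$ and using that every accepting path in the padded $\scrA'$ has exactly $K$ transitions, we obtain $\sum_{e' \in \pi'} \ell_{t,e'} = BK - \scrU(\out_t(\pi), y_t)$. Hence the regret against any comparator $\pi^*$ measured in gains on $\scrA$ equals the regret against $(\pi^*)'$ measured in additive losses on $\scrA'$, and Proposition~\ref{prop:1-1} ensures the correspondence $\pi_t \leftrightarrow \pi'_t$ used by CDCH to play in $\scrA$.

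Second, I would interpret the CDCH update-and-project step as Component Hedge on the unit-flow polytope $P \subseteq \RR^M$, whose vertices encode the indicator flows of accepting $\scrA'$-paths. I would then invoke the standard analysis of \citep{KoolenWarmuthKivinen2010}: with the unnormalized relative entropy as Bregman potential, the exponential-weight step followed by the relative-entropy projection yields a per-round drop inequality that telescopes in $t$. The initial potential $\Delta(\wvec^*, \wvec_1)$ between the minimum-entropy flow $\wvec_1$ and any corner flow $\wvec^*$ is bounded by $K \log(KM)$, the stability term scales as $\tfrac{1}{2}\eta B^2 K$ per round (since edge losses lie in $[0,B]$ and the total flow is $K$), and tuning $\eta = \Theta\bigl(\sqrt{\log(KM)/(T B^2 K)}\bigr)$ gives the advertised bound. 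The randomized decomposition routine produces a path $\pi'_t$ whose expected edge-indicator equals $\wvec_t$, so the sampled learner's expected additive loss matches $\wvec_t \cdot \ell_t$; the routine terminates in at most $|E'|$ extractions because each extraction zeroes at least one edge weight.

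The main obstacle is bookkeeping rather than a genuinely new argument: one must confirm that the $(K-2)(Q-2)+1$ padding transitions added to equalize path lengths carry zero loss at every round (so they neither contribute to regret nor inflate the stability term), that the relative-entropy projection preserves flow conservation, and that the $\log(KM)$ initial-entropy bound absorbs the padded edge count. Once these verifications are in place, the theorem is a direct instantiation of the Component Hedge guarantee from \citep{KoolenWarmuthKivinen2010} with edge-loss range $B$, path length $K$, and edge count $M$.
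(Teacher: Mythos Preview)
Your proposal is correct and follows essentially the same approach as the paper. The paper does not give a standalone proof of this theorem; it simply states that the bound is the Component Hedge guarantee of \citep{KoolenWarmuthKivinen2010} applied to the padded automaton $\scrA'$, relying on Theorem~\ref{thm:additivity} (and implicitly Proposition~\ref{prop:1-1}) for the reduction from the non-additive gain on $\scrA$ to additive edge losses on $\scrA'$. Your sketch spells out exactly this reduction and the instantiation of the parameters---edge-loss range $B$, uniform path length $K$ after padding, and edge count absorbed into the $\log(KM)$ term---which is precisely what the paper leaves to the citation (the general form you use appears explicitly as Equation~\eqref{eq:bounds-cdch} in Appendix~\ref{app:ensemble}, with $\Delta(\vvec_{\pi^*}\|\wvec_1)\le K\log(KM)$ yielding the stated bound).
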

The regret bounds of Theorem~\ref{th:nch} are in terms of the
count-based gain $\scrU(\cdot, \cdot)$.
\cite{CortesKuznetsovMohriWarmuth2015} gave regret guarantees for the
RRWM algorithm with count-based losses defined by
$- \log \scrU (\cdot, \cdot)$.  In Appendix \ref{app:log-reg}, we show
that the regret associated with $-\log \scrU$ is upper-bounded by the
regret bound associated with $\scrU$.  Observe that, even with this
approximation, the regret guarantees that we provide for CDCH are
tighter by a factor of $K$. In addition, our algorithm does not
require additional assumptions for an efficient implementation
compared to the RRWM algorithm of
\cite{CortesKuznetsovMohriWarmuth2015}.

\subsection{Semi-Bandit: Context-dependent Semi-Bandit Algorithm}

\cite{gyorgy2007line} gave an efficient algorithm for online path
learning with additive losses in the semi-bandit setting. In this
section, we present a \emph{Context-dependent Semi-Bandit} (CDSB)
algorithm extending that work to solving the problem of online path
learning with count-based gains in a semi-bandit setting.  To the best
of our knowledge, this is the first efficient algorithm with favorable
regret bounds for this problem.

As with the algorithm of \cite{gyorgy2007line}, CDSB makes use of a
set $C$ of \emph{covering paths} with the property that, for each
$e' \in E'$, there is an accepting path $\pi'$ in $C$ such that $e'$
belongs to $\pi'$.  At each round $t$, CDSB keeps track of a
distribution $p_t$ over all $N$ path experts by maintaining a weight
$w_t(e')$ on each transition $e'$ in $\scrA'$ such that the weights of
outgoing transitions for each state sum up to $1$ and
$p_t(\pi') = \prod_{e' \in \pi'} w_t(e')$, for all accepting paths
$\pi'$ in $\scrA'$. Therefore, we can sample a path $\pi'$ from $p_t$
in at most $K$ steps by selecting a random transition at each state
according to the distribution defined by $\wvec_t$. To make a
prediction, we sample a path in $\scrA'$ according to a mixture
distribution $(1-\gamma)p_t + \gamma \mu$, where $\mu$ is a uniform
distribution over paths in $C$.  We select $p_t$ with probability
$1-\gamma$ or $\mu$ with probability $\gamma$ and sample a random path
$\pi'$ from the randomly chosen distribution.

Once a path $\pi'_t$ in $\scrA'$ is sampled, we observe the gain of
each transition $e'$ of $\pi'_t$, denoted by $g_{t, e'}$.  CDSB sets
$\widehat{w}_t(e')\!=\!w_{t}(e') \exp(\eta \widetilde{g}_{t, e'})$,
where $\widetilde{g}_{t, e'}\!=\!(g_{t, e'} + \beta) / q_{t, e'}$ if
$e'\!\in\!\pi'_t$ and $\widetilde{g}_{t, e'} = \beta / q_{t, e'}$
otherwise.  Here, $\eta, \beta, \gamma\!>\!0$ 
are parameters of the algorithm and $q_{t, e'}$ is the flow through
$e'$ in $\scrA'$, which can be computed using a standard
shortest-distance algorithm over the probability
semiring~\citep{Mohri2009}.  The updated distribution is
$p_{t+1}(\pi') \propto \prod_{e' \in \pi'} \widehat{w}_t(e')$.  Next,
the \emph{weight pushing} algorithm \citep{Mohri1997} is applied  (see
Appendix~\ref{app:fst}), which results in new transition weights
$\wvec_{t+1}$ such that the total outflow out of each state is again
one and the updated probabilities are
$p_{t+1}(\pi') = \prod_{e' \in \pi'} w_{t+1}(e')$, thereby
facilitating sampling.  The computational complexity of each of the
steps above is polynomial in the size of $\scrA'$. The following
theorem from \cite{gyorgy2007line} provides a regret guarantee for
CDSB algorithm.

\begin{theorem}
\label{thm:cdsb}
Let $C$ denote the set of ``covering paths'' in $\scrA'$.  For any
$\delta \in (0, 1)$, with proper tuning of the parameters $\eta$,
$\beta$, and $\gamma$, the regret of the CDSB algorithm can be bounded
as follows with probability $1-\delta$:
\[
\forall \; \pi^* \in \scrP \Colon \quad 
\sum_{t=1}^n  \scrU(\text{out}_t(\pi^*), y_t) - \scrU(\text{out}_t(\pi_t), y_t)  
\leq  2 \, B \, \sqrt{TK} \Big( \sqrt{4 K |C| \ln N} + \sqrt{M \ln \tfrac{M}{\delta}} \Big).
\]
\end{theorem}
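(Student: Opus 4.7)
The plan is to reduce the non-additive semi-bandit problem on $\scrA$ to the standard additive semi-bandit online shortest path problem on $\scrA'$ and then invoke the regret bound of \cite{gyorgy2007line} essentially as a black box. The three ingredients needed for the reduction are (i) a one-to-one correspondence between path experts in the two automata, (ii) equality between the non-additive gain in $\scrA$ and the additive gain in $\scrA'$, and (iii) equivalence of the feedback revealed in each setting.

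For (i) and (ii), Proposition~\ref{prop:1-1} guarantees that each accepting path $\pi$ in $\scrA$ is in one-to-one correspondence with an accepting path $\pi'$ in $\scrA'$, and Theorem~\ref{thm:additivity} gives
\[
\scrU(\out_t(\pi),y_t) \;=\; \sum_{e'\in\pi'} g_{t,e'}.
\]
Consequently, the cumulative gain of any comparator path $\pi^*$ and of the learner's path $\pi_t$ equals the cumulative additive gain of the corresponding paths $(\pi^*)'$ and $\pi'_t$ in $\scrA'$, so the regret we wish to bound equals the regret of the CDSB algorithm viewed as an additive semi-bandit algorithm on $\scrA'$. For (iii), I would verify that the semi-bandit feedback in $\scrA$ (namely $y_t$ and $\out_t(e)$ for each $e$ on $\pi_t$) is enough to compute the per-transition gain $g_{t,e'}$ for every $e'$ lying on $\pi'_t$. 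This is immediate from the construction of $\scrA'$: every transition $e' = \# e_1\cdots e_r$ on $\pi'_t$ corresponds to a contiguous segment $e_1\cdots e_r$ of $\pi_t$, so $\out_t(e')$ is known, and since $\Theta(y_t)$ is computable from $y_t$, the gain $g_{t,e'}$ defined in Theorem~\ref{thm:additivity} is observable. Hence CDSB can indeed be run on $\scrA'$ using only the allowed feedback.

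With the reduction in place, I would invoke the regret bound of \cite{gyorgy2007line} for their semi-bandit shortest path algorithm, which is exactly the template that CDSB implements on $\scrA'$: maintain transition weights summing to one out of each state, sample according to a mixture $(1-\gamma) p_t + \gamma \mu$ with $\mu$ uniform over a covering set $C$, form unbiased estimators $\widetilde{g}_{t,e'} = (g_{t,e'} + \beta)/q_{t,e'}\cdot \mathbf{1}[e'\in\pi'_t] + \beta/q_{t,e'}\cdot\mathbf{1}[e'\notin\pi'_t]$ using edge flows $q_{t,e'}$ obtained by shortest-distance computation, and restore the out-flow-one property by weight pushing. Substituting the $\scrA'$-parameters $K$, $M$, $N$, $|C|$, and the per-edge gain bound $B$, and tuning $\eta$, $\beta$, $\gamma$ exactly as in \cite{gyorgy2007line}, yields the stated high-probability bound
\[
2B\sqrt{TK}\!\left(\sqrt{4K|C|\ln N} + \sqrt{M\ln(M/\delta)}\right).
\]

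The main obstacle is really just the verification that the reduction preserves both the regret and the feedback structure; once that is established, no new concentration or potential-function analysis is required, since \cite{gyorgy2007line} provide a generic bound for any DAG with a single source and non-negative additive bounded gains, and $\scrA'$ has exactly this form. A minor technical point to check is that a covering set $C$ with the desired property exists in $\scrA'$: this follows from the standard construction of extracting, for each $e'\in E'$, any accepting path through $e'$, which is possible because every transition in $\scrA'$ lies on some accepting path by construction of the composition $\Pi(\scrA\circ T_\scrA)$.
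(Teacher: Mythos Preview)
Your proposal is correct and mirrors the paper's approach exactly: the paper does not give a standalone proof of this theorem but simply states it as a direct application of the regret guarantee of \cite{gyorgy2007line} to the context-dependent automaton $\scrA'$, relying on Proposition~\ref{prop:1-1} and Theorem~\ref{thm:additivity} for the reduction. Your write-up is in fact more explicit than the paper about why the semi-bandit feedback on $\scrA$ suffices to compute the per-transition gains on $\pi'_t$, a point the paper leaves implicit.
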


\subsection{Full Bandit: Context-dependent ComBand Algorithm}

Here, we present an algorithm for online path learning with
count-based gains in the full bandit setting.
\cite{cesa2012combinatorial} gave an algorithm for online path
learning with additive gains, \textsc{ComBand}.  Our generalization,
called \emph{Context-dependent ComBand} (CDCB), is the first efficient
algorithm with favorable regret guarantees for learning with
count-based gains in this setting.  For the full bandit setting with
\emph{arbitrary} gains, we develop an efficient execution of \EXP,
called \EXPAG, in Appendix~\ref{sec:exp3}.

As with CDSB, CDCB maintains a distribution $p_t$ over all $N$ path
experts using weights $\wvec_t$ on the transitions such that the
outflow of each state is one and the probability of each path experts
is the product of the weights of the transitions along that path.  To
make a prediction, we sample a path in $\scrA'$ according to a mixture
distribution $q_t = (1-\gamma) p_t + \gamma \mu$, where $\mu$ is a
uniform distribution over the paths in $\scrA'$.  Note that this
sampling can be efficiently implemented as follows. As a
pre-processing step, define $\mu$ using a separate set of weights
$\wvec^{(\mu)}$ over the transitions of $\scrA'$ in the same form.
Set all the weights $\wvec^{(\mu)}$ to one and apply the
weight-pushing algorithm to obtain a uniform distribution over the
path experts.  Next, we select $p_t$ with probability $1 - \gamma$ or
$\mu$ with probability $\gamma$ and sample a random path $\pi'$ from
the randomly chosen distribution.

After observing the scalar gain $g_{\pi'}$ of the chosen path, CDCB
computes a surrogate gain vector for all transitions in $\scrA'$ via
$ \widetilde{\gvec}_{t} \!=\!g_{\pi'} P \vvec_{\pi'}$, where $P$ is
the pseudo-inverse of $\mathbb{E}[\vvec_{\pi'} \vvec_{\pi'}^T]$ and
$\vvec_{\pi'}\!\in\!\{0,1\}^{M}$ is a bit representation of the path
$\pi'$.  As for CDSB, we set
$\widehat{w}(e') = w_{t}(e') \exp(-\eta \widetilde{g}_{t, e'})$ and
update $\scrA'$ via weighted-pushing to compute $\wvec_{t+1}$. We
obtain the following regret guarantees from
\cite{cesa2012combinatorial} for CDCB:

\begin{theorem}
\label{thm:cdcb}
Let $\lambda_\text{min}$ denote the smallest non-zero eigenvalue of
$\mathbb{E}[\vvec_{\pi'} \vvec_{\pi'}^T]$
where $\vvec_{\pi'} \in \{0,1\}^{M}$ is the bit representation of the path $\pi'$
which is distributed according to the uniform distribution $\mu$.
With proper tuning of the parameters $\eta$ and $\gamma$, the regret
of CDCB can be bounded as follows:
$$
\forall \; \pi^* \in \scrP \Colon \quad 
\sum_{t=1}^n  \scrU(\text{out}_t(\pi^*), y_t) - \scrU(\text{out}_t(\pi_t), y_t)  
\leq 2 \, B \, \sqrt{\bigg( \frac{2K}{M \lambda_\text{min}} + 1 \bigg) T M \ln N}.
$$
\end{theorem}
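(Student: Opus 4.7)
The plan is to reduce the non-additive problem on $\scrA$ to an additive one on $\scrA'$, and then invoke the \textsc{ComBand} regret analysis of \cite{cesa2012combinatorial} essentially off the shelf.

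First, by Proposition~\ref{prop:1-1} the accepting paths in $\scrA$ are in bijection with the accepting paths in $\scrA'$, and by Theorem~\ref{thm:additivity} the count-based gain of $\pi$ at round $t$ equals the sum of the transition gains $g_{e'\!,t}$ along the associated path $\pi'$ in $\scrA'$. Therefore the left-hand side of the claimed regret bound is exactly the additive-gain regret of the learner's induced strategy over $\scrA'$, measured against the image $\pi'^{*}$ of $\pi^{*}$. Furthermore, each transition gain lies in $[0,B]$, so for any $\pi'$ the cumulative gain at a round satisfies $\gvec_t\cdot\vvec_{\pi'}\in[0,BK]$, where $\vvec_{\pi'}\in\{0,1\}^{M}$ is the indicator vector of $\pi'$ whose $\ell_1$-norm is at most $K$ (the longest path length).

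Next, I would verify that CDCB, when viewed on $\scrA'$, is literally \textsc{ComBand}. The sampling distribution $q_t = (1-\gamma)p_t + \gamma\mu$ is of the exact form required: $p_t$ is a product distribution $p_t(\pi')=\prod_{e'\in\pi'}w_t(e')$ over accepting paths, and $\mu$ is the uniform distribution over paths in $\scrA'$, both sampled in time polynomial in $|\scrA'|$ via weight pushing (Appendix~\ref{app:fst}). The surrogate $\widetilde{\gvec}_t=g_{\pi'_t}P\vvec_{\pi'_t}$, with $P$ the Moore--Penrose pseudo-inverse of $\mathbb{E}_{\pi'\sim q_t}[\vvec_{\pi'}\vvec_{\pi'}^\top]$, is the standard \textsc{ComBand} estimator. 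Its unbiasedness on the column span containing every $\vvec_{\pi'}$ (which is guaranteed because $\mu$ has full support on paths) makes the exponential-weights update $\widehat{w}(e')=w_t(e')\exp(-\eta\widetilde{g}_{t,e'})$ followed by weight pushing the standard multiplicative update on path weights $\propto\prod_{e'\in\pi'}\widehat{w}(e')$. I would emphasise that weight pushing preserves the path-probability factorization, so the algorithm's state at round $t+1$ remains in the required product form.

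Given this reduction, applying the \textsc{ComBand} regret theorem \citep[Theorem 3]{cesa2012combinatorial} to $\scrA'$ with ambient dimension $M$, path-length parameter $K$, uniform exploration distribution $\mu$ with smallest non-zero eigenvalue $\lambda_{\min}$, and per-coordinate gain bound $B$ yields, after tuning $\eta$ and $\gamma$, the stated bound
\[
2\,B\,\sqrt{\Bigl(\frac{2K}{M\lambda_{\min}}+1\Bigr)\,TM\ln N}.
\]
The main technical point to check carefully is that the scaling constants in the \textsc{ComBand} bound carry over: specifically, that their "losses in $[0,1]$" hypothesis can be rescaled to gains in $[0,B]$ with the $B$ appearing outside the square root, that $K$ correctly plays the role of the maximum $\ell_1$-norm of $\vvec_{\pi'}$, and that $\ln N$ rather than $\ln|\text{paths in }\scrA'|$ can be used via the remark immediately following Theorem~\ref{thm:cdcb} (at most $N$ distinct paths induce distinct predictions). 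I expect this bookkeeping to be the only delicate step; there is no new algorithmic analysis, as all the heavy lifting is in Theorem~\ref{thm:additivity} which furnishes the additive structure that \textsc{ComBand} requires.
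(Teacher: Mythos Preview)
Your proposal is correct and mirrors the paper's own treatment: the paper does not give a standalone proof of Theorem~\ref{thm:cdcb} but simply states that the regret guarantees follow from \cite{cesa2012combinatorial} once CDCB is recognised as \textsc{ComBand} run on the context-dependent automaton $\scrA'$ with additive gains furnished by Theorem~\ref{thm:additivity}. Your sketch spells out precisely this reduction (bijection of paths via Proposition~\ref{prop:1-1}, additivity via Theorem~\ref{thm:additivity}, then the off-the-shelf \textsc{ComBand} bound with the appropriate parameters $M$, $K$, $B$, $N$, $\lambda_{\min}$), so there is nothing substantively different to compare.
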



\section{Extension to Gappy Count-Based Gains}
\label{sec:gappy-gains}

Here, we generalize the results of Section~\ref{sec:count-gains} to a
broader family of non-additive gains called \emph{gappy count-based
  gains}: the gain of each path depends on the \emph{discounted}
counts of \emph{gappy} occurrences of a fixed set of patterns
$\theta_1, \ldots, \theta_p$ in the sequence output by that path.  In
a gappy occurrence, there can be ``gaps'' between symbols of the
pattern.  The count of a gappy occurrence is discounted
multiplicatively by $\gamma^k$ where $\gamma \in [0,1]$ is a fixed
\emph{discount rate} and $k$ is the total length of gaps.  For
example, the gappy occurrences of the pattern $\theta = \Blue{aab}$ in
a sequence $y = babbaabaa$ with discount rate $\gamma$ are

\begin{itemize}[leftmargin=.1cm,itemindent=.35cm]
\setlength\itemsep{.1cm}

\item ${b\,a\,b\,b\,}\Blue{\underline{a\,a\,b}}\,{a\,a}$, length of
  gap = $0$, discount factor = $1$;

\item
  ${b\,}\Blue{\underline{a}}\,\Gray{b}\,\Gray{b}\,\Blue{\underline{a}}\,\Gray{a}\,\Blue{\underline{b}}\,{a\,a}$,
  length of gap = $3$, discount factor = $\gamma^3$;

\item
  ${b}\,\Blue{\underline{a}}\,\Gray{b}\,\Gray{b}\,\Gray{a}\,\Blue{\underline{a\,b}}\,{a\,a}$,
  length of gap = $3$, discount factor = $\gamma^3$,

\end{itemize}
which makes the total discounted count of gappy occurrences of
$\theta$ in $y$ to be $1 + 2 \cdot \gamma^3$.  Each sequence of
symbols $y \in \Sigma^*$ can be represented as a discounted count
vector $\Theta(y) \in \RR^p$ of gappy occurrences of the patterns
whose $i$th component is ``the discounted number of gappy occurrences
of $\theta_i$ in $y$''.  The gain function $\scrU$ is defined in the
same way as in Equation~\eqref{eq:gain}.\footnote{The regular
  count-based gain can be recovered by setting $\gamma=0$.} A typical
instance of such gains is gappy $n$-gram gains where the patterns are
all $|\Sigma|^n$-many $n$-grams.

The key to extending our results in Section~\ref{sec:count-gains}
to gappy $n$-grams is an appropriate definition of the alphabet $E'$, 
the rewrite rules, and a new context-dependent automaton $\scrA'$. 
Once $\scrA'$ is constructed, the algorithms and regret guarantees
presented in Section~\ref{sec:algs} can be extended to gappy count-based gains.
To the best of our knowledge, this provides the first efficient
online algorithms with favorable regret guarantees
for gappy count-based gains in full information, semi-bandit and
full bandit settings.

\paragraph{Context-Dependent Rewrite Rules.}
We extend the definition of $E'$ so that it also encodes the total
length $k$ of the gaps:
$E' = \Big\{ (\#e_1 \cdots e_r)_k \mid e_1 \cdots e_r \in E, \; r \in
\{ |\theta_1|, \ldots ,|\theta_p| \}, \newline \; k \in \mathbb{Z}, \;
k\geq0 \Big\}$.  Note that the discount factor in gappy occurrences
does not depend on the position of the gaps.  Exploiting this fact,
for each pattern of length $n$ and total gap length $k$, we reduce the
number of output symbols by a factor of ${k+n-2}\choose{k}$ by
encoding the number of gaps as opposed to the position of the gaps.

We extend the rewrite rules in order to incorporate the gappy
occurrences.  Given $e' = (\# e_{i_1}e_{i_2}\ldots e_{i_n})_k$, for
all path segments $e_{j_1}e_{j_2}\ldots e_{j_{n+k}}$ of length $n+k$
in $\scrA$ where $\{i_s\}_{s=1}^n$ is a subsequence of
$\{j_r\}_{r=1}^{n+k}$ with ${i_1}={j_1}$ and ${i_n}={j_{n+k}}$, we
introduce the rule:
\[
e_{j_1}e_{j_2}\ldots e_{j_{n+k}} 
\longrightarrow (\# e_{i_1}e_{i_2}\ldots e_{i_n})_k
/ \epsilon {\underline{\hspace{.5cm}}} \epsilon.
\]
As with the non-gappy case in Section~\ref{sec:count-gains}, the
simultaneous application of all these rewrite rules can be efficiently
compiled into a FST $T_\scrA$.  The context-dependent transducer
$T_\scrA$ maps any sequence of transition names in $E$ into a sequence
of corresponding gappy occurrences.  The example below shows how
$T_\scrA$ outputs the gappy trigrams given a path segment of length
$5$ as input:
\begin{align*}
e_1, e_2, e_3, e_4, e_5 \xrightarrow{T_\scrA}
& (\#e_1e_2e_3)_0, (\#e_2e_3e_4)_0, (\#e_3e_4e_5)_0, 
 (\#e_1e_2e_4)_1, (\#e_1e_3e_4)_1,  (\#e_2e_3e_5)_1, \\
& (\#e_2e_4e_5)_1, (\#e_1e_2e_5)_2, (\#e_1e_4e_5)_2, (\#e_1e_3e_5)_2.
\end{align*}

\paragraph{Context-Dependent Automaton $\scrA'$.}
As in Section \ref{subsec:additiveA}, we construct the
context-dependent automaton as
$\scrA'\!:=\!\Pi ( \scrA\!\circ\!T_\scrA )$, which admits a fixed
topology through trials.  The rewrite rules are constructed in a way
such that different paths in $\scrA$ are rewritten differently.
Therefore, $T_\scrA$ assigns a unique output to a given path expert in
$\scrA$.  Proposition~\ref{prop:1-1} ensures that for every accepting
path $\pi$ in $\scrA$, there is a unique corresponding accepting path
in $\scrA'$.

For any round $t\!\in\![T]$ and any
$e'\!=\!(\# e_{i_1} e_{i_2} \cdots e_{i_n})_k$, define
$\text{out}_t( e'
):=\text{out}_t(e_{i_1})\ldots\text{out}_t(e_{i_n})$.  Let
$\text{out}_t(\scrA')$ be the automaton with the same topology as
$\scrA'$ where each label $e'\!\in\!E'$ is replaced by
$\text{out}_t(e')$.  Given $y_t$, the representation $\Theta(y_t)$ can
be found, and consequently, the additive contribution of each
transition of $\scrA'$.  Again, we show the additivity of the gain in
$\scrA'$ (see Appendix~\ref{app:proofs} for the proof).

\begin{theorem}
\label{thm:gap-additivity}
Given the trial $t$ and discount rate $\gamma \in [0,1]$, for each transition $e' \in E'$ in $\scrA'$, define the gain $g_{e'\!,t} := \gamma^k \left[ \Theta(y_t) \right]_i$ 
if $\text{out}_t(e') = (\theta_i)_k$ for some $i$ and $k$ 
and $g_{e'\!,t}:= 0$ if no such $i$ and $k$ exist.
Then, the gain of each path $\pi$ in $\scrA$ at trial $t$ can be expressed as an additive gain of $\pi'$ in $\scrA'$:
\[
\forall t \in [1,T], \;
\forall \pi \in \scrP, \quad
\scrU(\text{out}_t(\pi), y_t)
=
\sum_{e' \in \pi'} g_{e'\!,t} \; .
\]
\end{theorem}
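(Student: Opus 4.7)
The plan is to expand the inner product $\scrU(\text{out}_t(\pi), y_t) = \Theta(\text{out}_t(\pi)) \cdot \Theta(y_t)$, unfold each component of $\Theta(\text{out}_t(\pi))$ into its definition as a sum of discounted gappy occurrences, and then reorganize the resulting double sum through a bijection between gappy subsequences of $\pi$ and transitions of the corresponding path $\pi'$ in $\scrA'$.

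More concretely, I would first write
\begin{equation*}
\scrU(\text{out}_t(\pi), y_t) \;=\; \sum_{i=1}^p [\Theta(y_t)]_i \sum_{\alpha} \gamma^{k(\alpha)},
\end{equation*}
where the inner sum runs over all gappy occurrences $\alpha$ of $\theta_i$ in $\text{out}_t(\pi)$ and $k(\alpha)$ denotes the corresponding gap length. Each such $\alpha$ is, by unpacking, a choice of $n = |\theta_i|$ positions along $\pi$, i.e., a subsequence of transitions $e_{i_1}, \ldots, e_{i_n}$ whose outputs at round $t$ concatenate to $\theta_i$ and whose span along $\pi$ satisfies $k(\alpha) = i_n - i_1 + 1 - n$.

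The crucial step is then to establish a bijection between these selections and the transitions $e' \in \pi'$ with $\text{out}_t(e') = \theta_i$. By construction of the rewrite rules, every such $e'$ has the form $(\#e_{i_1}\cdots e_{i_n})_k$ and, after composition with $\scrA$ and projection, arises from a unique contiguous window $e_{j_1}\cdots e_{j_{n+k}}$ of $\pi$ containing $e_{i_1}, \ldots, e_{i_n}$ as a subsequence with matching endpoints $e_{j_1} = e_{i_1}$, $e_{j_{n+k}} = e_{i_n}$. The extension of Proposition~\ref{prop:1-1} to the gappy case guarantees that $\pi'$ is unique, so the association is one-to-one. Under this correspondence, any transition $e'$ whose output is not some $\theta_i$ simply contributes $g_{e',t} = 0$, and swapping the order of summation converts the double sum above into $\sum_{e' \in \pi'} g_{e',t}$, giving the theorem.

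The main obstacle is verifying the bijection rigorously: one must check that the simultaneous application of the extended rewrite rules, once compiled into $T_\scrA$ and composed with $\scrA$, produces $\pi'$ as a concatenation of exactly one transition per candidate $n$-subsequence of $\pi$, carrying the correct gap length $k$ in its encoding. This follows from the semantics of simultaneous context-dependent rule compilation \citep{MohriSproat1996,KaplanKay1994} applied to the rules of this section, together with the uniqueness part of Proposition~\ref{prop:1-1}. Alternatively, the proof of Theorem~\ref{thm:additivity} can be replicated almost verbatim, replacing the enumeration of consecutive length-$n$ substrings of $\pi$ by the enumeration of all $n$-subsequences inside windows of length $n+k$, weighted by $\gamma^k$; the combinatorial bookkeeping is identical once the bijection is in hand.
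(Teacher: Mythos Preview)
Your proposal is correct and follows essentially the same route as the paper's proof: expand the dot product $\Theta(\text{out}_t(\pi))\cdot\Theta(y_t)$, express $[\Theta(\text{out}_t(\pi))]_i$ as $\sum_{e'\in\pi'}\sum_k \gamma^k\,\mathbf{1}\{\text{out}_t(e')=(\theta_i)_k\}$ via the correspondence between gappy occurrences in $\pi$ and transitions of $\pi'$, and swap the order of summation. The paper asserts this correspondence in one sentence ``by construction of the context-dependent automaton based on rewrite rules'' rather than spelling out the bijection as you do, but the argument is otherwise identical.
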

We can extend the algorithms and regret guarantees presented in
Section~\ref{sec:algs} to gappy count-based gains.  To the best of our
knowledge, this provides the first efficient online algorithms with
favorable regret guarantees for gappy count-based gains in full
information, semi-bandit and full bandit settings.

\section{Conclusion and Open Problems}
\label{sec:conclusions}

We presented several new algorithms for online non-additive path
learning with very favorable regret guarantees for the full
information, semi-bandit, and full bandit scenarios.\ignore{ Our main
contribution lies in introducing an intermediate automaton with
additive gains and size independent of the alphabet size associated
with the problem. This enables us to extend existing online learning
algorithms to efficiently tackle the problem of non-additive path
learning.}  We conclude with two open problems:
(1) Non-acyclic expert automata: we assumed here that the
expert automaton $\scrA$ is acyclic and the language of patterns
$\scrL = \{ \theta_1, \ldots, \theta_p \}$ is finite.  Solving the
non-additive path learning problem with cyclic expert automaton
together with (infinite) regular language $\scrL$ of patterns remains
an open problem;
(2) Incremental construction of $\scrA'$: in this work,
regardless of the data and the setting, the context-dependent
automaton $\scrA'$ is constructed in advance as a pre-processing step.
Is it possible to construct $\scrA'$ gradually as the learner goes
through trials?  Can we build $\scrA'$ incrementally in different
settings and keep it as small as possible as the algorithm is
exploring the set of paths and learning about the revealed data?

\vfill
\pagebreak


\acks{The work of MM was partly funded by NSF CCF-1535987 
and NSF IIS-1618662. Part of this work was done while MKW 
was at UC Santa Cruz, supported by NSF grant IIS-1619271.}

\bibliography{Rahmanian19}

\vfill
\pagebreak
\appendix

\section{Applications to Ensemble Structured Prediction}
\label{app:ensemble}

The algorithms discussed in Section~\ref{sec:algs} can be used for the online learning of ensembles of structured prediction experts,
and consequently, significantly improve the performance of algorithms in a number of areas including
machine translation, speech recognition, other language processing areas, optical character recognition, and computer vision.
In structured prediction problems, the output associated with a model $h$ is a structure $y$ 
that can be decomposed and represented by $\ell$ substructures $y_1, \ldots, y_\ell$. For instance, $h$ may be a machine translation system and $y_i$
a particular word.

The problem of ensemble structured prediction can be described as follows.
The learner has access to a set of $r$ experts $h_1, \ldots, h_r$ to make an ensemble prediction.
Therefore, at each round $t \in [1,T]$, the learner can use the outputs of the $r$ experts $\text{out}_t(h_1), \ldots, \text{out}_t(h_r)$.
As illustrated in Figure~\ref{fig:ensemble}(a), each expert $h_j$ consists of $\ell$ substructures $h_j = (h_{j,1}, \ldots, h_{j,\ell})$.

\begin{figure}[H]
\centering
\begin{minipage}[R]{0.45\textwidth}

\scalebox{.8}{

\begin{tikzpicture}%
  [>=stealth,
   shorten >=1pt,
   node distance=2cm,
   on grid,
   auto,
   every state/.style={draw=black!60, fill=black!5 }
  ]
  
\tikzset{cutting/.style = {->,> = latex, very thick, yellow}}
    \tikzset{rectangle/.style={draw=gray,dashed,fill=green!1,thick,inner sep=5pt}}

\node[text width=1cm] at (-1.5,0) {\large $h_1$};
\node[text width=1cm] at (-1.5,-.8) {\large $\vdots$};
\node[text width=1cm] at (3,-.8) {\large $\vdots$};
\node[text width=1cm] at (-1.5,-2) {\large $h_r$};
\node[text width=1cm] at (3,-3.2) {\large (a)};

\node[state, very thick] (0)                  {0};
\node[state, very thick] (1) [below =of 0] {0};

\node[state] (2) [right=of 0] {1};
\node[state] (3) [right=of 1] {1};

\node[state] (4) [right=of 2] {$\cdots$};
\node[state] (5) [right=of 3] {$\cdots$};

\node[state, accepting] (6) [right=of 4] {$\ell$};
\node[state, accepting] (7) [right=of 5] {$\ell$};

\path[->]
(0) 	edge node {$h_{1,1}$} (2)
(2) 	edge node {$h_{1,2}$} (4)
(4) 	edge node {$h_{1,\ell}$} (6)
(1) 	edge node {$h_{r,1}$} (3)
(3) 	edge node {$h_{r,2}$} (5)
(5) 	edge node {$h_{r,\ell}$} (7)
;

\end{tikzpicture}
}

\end{minipage}
\hfill
\begin{minipage}[L]{0.45\textwidth}
\centering
\scalebox{.8}{

\begin{tikzpicture}%
  [>=stealth,
   shorten >=1pt,
   node distance=2cm,
   on grid,
   auto,
   every state/.style={draw=black!60, fill=black!5 }
  ]
  
\tikzset{cutting/.style = {->,> = latex, very thick, yellow}}
    \tikzset{rectangle/.style={draw=gray,dashed,fill=green!1,thick,inner sep=5pt}}

\node[text width=1cm] at (-.7,-1.2) {\large $\scrA$};
\node[text width=1cm] at (2,-2.5) {\large (b)};


\node[state, very thick] (0)                  {0};
\node[state] (1) [right=of 0] {1};
\node[state] (2) [right=of 1] {$\cdots$};
\node[state, accepting] (3) [right=of 2] {$\ell$};

\path[->]
	(0) edge[bend left]	node {$h_{1,1}$} (1)
		  edge[bend right]	node [below] {$h_{r,1}$} (1)
		  edge 	node {$\ldots$} (1)
	(1) edge[bend left]	node {$h_{1,2}$} (2)
		  edge[bend right]	node [below] {$h_{r,2}$} (2)
		  edge 	node {$\ldots$} (2)
	(2) edge[bend left]	node {$h_{1,\ell}$} (3)
		  edge[bend right]	node [below] {$h_{r,\ell}$} (3)
		  edge 	node {$\ldots$} (3)
;

\end{tikzpicture}
}

\end{minipage}

\caption{
(a) the structured experts $h_1, \ldots, h_r$.
(b) the expert automaton $\scrA$ allowing all combinations.}
\label{fig:ensemble}
\end{figure}
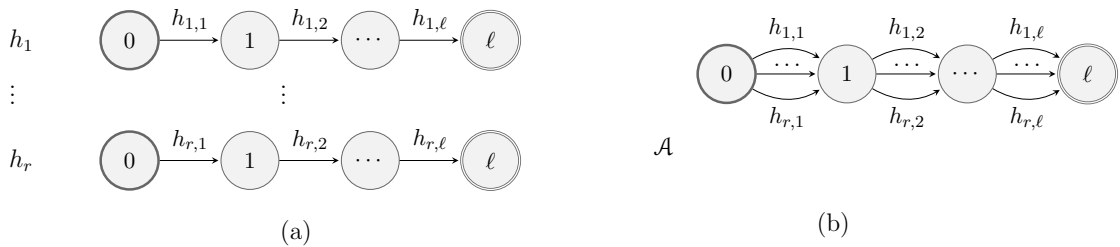

Represented by paths in an automaton, the substructures of these experts can be \emph{combined together}.
Allowing all combinations, Figure~\ref{fig:ensemble}(b) illustrates the expert automaton $\scrA$ induced by $r$ structured experts with $\ell$ substructures.
The objective of the learner is to find the best path expert which is the combination of substructures with the best expected gain.
This is motivated by the fact that one particular expert may be better at predicting one substructure while some other expert may be more
accurate at predicting another substructure. 
Therefore, it is desirable to combine the substructure predictions of all experts to obtain the more accurate prediction.

Consider the online path learning problem with expert automaton $\scrA$ in Figure~\ref{fig:ensemble}(b) 
with non-additive $n$-gram gains described in Section~\ref{sec:count-gains} for typical small values of $n$ (e.g. $n=4$). 
We construct the context-dependent automaton $\scrA'$ via a set of rewrite rules.
The rewrite rules are as follows:
$$
h_{j_1,i+1}, \; h_{j_2, i+2}, \; \ldots, h_{j_n, i+n} \; \rightarrow \#h_{j_1,i+1}h_{j_2, i+2} \ldots h_{j_n, i+n} \, / \,  \epsilon \, {\underline{\hspace{.5cm}}} \, \epsilon,
$$
for all $j_1,\ldots,j_n \in [1,r]$, $i \in [0,\ell-n]$. 
The number of rewrite rules is $(\ell - n +1) \, r^n$.
We compile these rewrite rules into the context-dependent transducer $T_\scrA$,
and then construct the context-dependent automaton $\scrA' = \Pi( \scrA \circ T_\scrA)$.

The context-dependent automaton $\scrA'$ is illustrated in Figure~\ref{fig:ensemble-a'}.
The transitions in $\scrA'$ are labeled with $n$-grams of transition names $h_{i,j}$ in $\scrA$.
The context-dependent automaton $\scrA'$ has $\ell-n+1$ layers of states each of which acts as a ``memory'' 
indicating the last observed $(n-1)$-gram of transition names $h_{i,j}$. 
With each intermediate state (i.e.~a state which is neither the initial state nor a final state), a $(n-1)$-gram is associated.
Each layer contains $r^{n-1}$ many states encoding all combinations of $(n-1)$-grams ending at that state. 
Each intermediate state has $r$ incoming transitions which are the $n$-grams ending with $(n-1)$-gram associated with the state.
Similarly each state has $r$ outgoing transitions which are the $n$-grams starting with $(n-1)$-gram associated with the state.

The number of states and transitions in $\scrA'$ are $Q=1 + r^n(\ell-n)$ and $M=r^n (\ell - n +1)$, respectively.
Note that the size of $\scrA'$ does not depend on the size of the output alphabet $\Sigma$.
Also notice that all paths in $\scrA'$ have equal length of $K = \ell - n + 1$.
Furthermore the number of paths in $\scrA'$ and $\scrA$ are the same and equal to $N = r^\ell$.

\begin{figure}
\centering
\scalebox{.8}{

\begin{tikzpicture}%
  [>=stealth,
   shorten >=1pt,
   node distance=2cm,
   on grid,
   auto,
   every state/.style={draw=black!60, fill=black!5 }
  ]
  
\tikzset{cutting/.style = {->,> = latex, very thick, yellow}}
    \tikzset{rectangle/.style={draw=gray,dashed,fill=green!1,thick,inner sep=5pt}}

\node[rectangle, anchor=south west, minimum width=1.3cm, minimum height=8cm] (A) at (1.5,-3) {};
\node[rectangle, anchor=south west, minimum width=1.3cm, minimum height=8cm] (A) at (4.5,-3) {};
\node[rectangle, anchor=south west, minimum width=1.3cm, minimum height=8cm] (A) at (6.5,-3) {};
\node[rectangle, anchor=south west, minimum width=1.3cm, minimum height=8cm] (A) at (9.5,-3) {};

\node[text width=1cm] at (0.5,-3.5) {\large $0$};
\node[text width=1cm] at (2.5,-3.5) {\large $1$};
\node[text width=1cm] at (5.5,-3.5) {\large $i-1$};
\node[text width=1cm] at (7.5,-3.5) {\large $i$};
\node[text width=3cm] at (10.5,-3.5) {\large $\ell-n+1$};

\node[text width=1cm] at (2.5,1) {\large $\vdots$};
\node[text width=1cm] at (5.5,1) {\large $\vdots$};
\node[text width=1cm] at (7.5,1) {\large $\vdots$};
\node[text width=1cm] at (10.5,1) {\large $\vdots$};

\draw [decorate,decoration={brace,amplitude=10pt},xshift=-4pt,yshift=0pt] (11.5,5) -- (11.5,-3) node [black,midway,xshift=9pt] {\large $r^{n-1}$};


\node[state, very thick] (0)                  {};

\node[state] (2) [above right=3cm of 0] {};
\node[state] (1) [above  = of 2] {};
\node[state] (3) [below right=3cm of 0] {};

\node[state] (4) [right  =3cm of 1] {};
\node[state] (5) [right  =3cm of 2] {};
\node[state] (6) [right  =3cm of 3] {};

\node[state] (7) [right  = of 4] {};
\node[state] (8) [right  = of 5] {};
\node[state] (9) [right  = of 6] {};

\node[state, accepting] (10) [right  =3cm of 7] {};
\node[state, accepting] (11) [right  =3cm of 8] {};
\node[state, accepting] (12) [right  =3cm of 9] {};

\path[->]
(0) 	edge node {$\#h_{*,1}\ldots h_{*,n}$} (2)
(5) 	edge node [below]{$\qquad\#h_{*,i}\ldots h_{*,i+n-1}$} (9)
;

\end{tikzpicture}
}
\caption{The context-dependent automaton $\scrA'$ for the expert automaton $\scrA$ depicted in Figure~\ref{fig:ensemble}(b).}
\label{fig:ensemble-a'}
\end{figure}
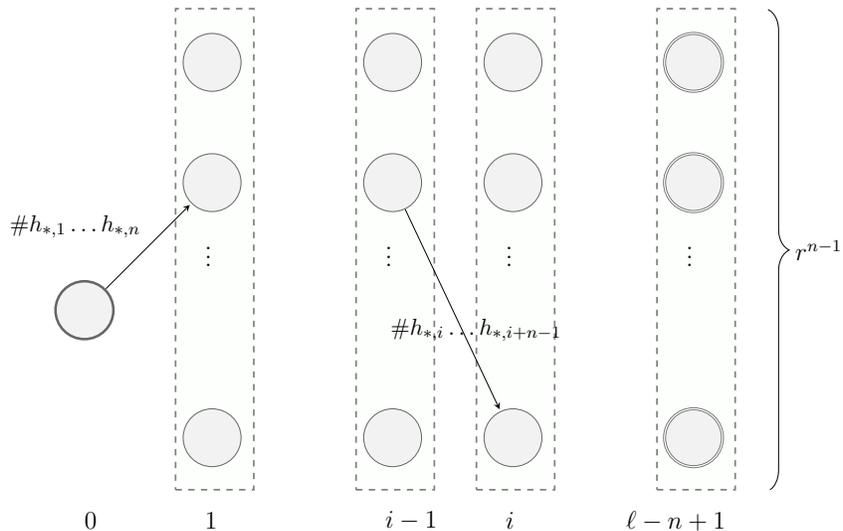

We now apply the algorithms introduced in Section~\ref{sec:algs}.

\subsection{Full Information: Context-dependent Component Hedge Algorithm}
We apply the CDCH algorithm to this application in full information setting.
The context-dependent automaton $\scrA'$ introduced in this section is highly structured.
We can exploit this structure and obtain better bounds comparing to the general bounds of Theorem \ref{th:nch} 
for CDCH.

\begin{theorem}
Let $B$ denote an upper-bound for the gains of all the transitions in $\scrA'$, and $T$ be the time horizon.
The regret of CDCH algorithm on ensemble structured prediction
with $r$ predictors consisting of $\ell$ substructures with $n$-gram gains 
can be bounded as
$$
\text{Regret}_{CDCH} \leq \sqrt{2 \, T \, B^2 \,  (\ell - n +1)^2 \, n \, \log r} + B \, (\ell - n +1) \, n \, \log r.
$$
\end{theorem}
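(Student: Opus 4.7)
The plan is to apply the general CDCH regret bound of Theorem~\ref{th:nch}, but to sharpen it by replacing the worst-case relative-entropy estimate $K \log(KM)$ with the \emph{exact} relative entropy computed from the highly regular layered structure of $\scrA'$ depicted in Figure~\ref{fig:ensemble-a'}.

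Recall that CDCH is an instance of Component Hedge, whose regret against any comparator path $\pi^*$ is governed by the unnormalized relative entropy $D(\wvec^* \,\|\, \wvec_1)$ between the unit-flow indicator $\wvec^* \in P$ of $\pi^*$ and the initial flow $\wvec_1 \in P$; the general bound of Theorem~\ref{th:nch} follows from the crude estimate $D(\wvec^* \,\|\, \wvec_1) \le K \log(KM)$. I would initialize $\wvec_1$ to be the uniform-over-paths flow on $\scrA'$ and compute $D(\wvec^* \,\|\, \wvec_1)$ exactly, exploiting the regular branching structure that the ensemble application imposes on $\scrA'$.

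The central calculation is to show that every transition of $\scrA'$ carries flow exactly $1/r^n$ under this $\wvec_1$. By symmetry, each of the $r^n$ transitions out of the initial state receives flow $1/r^n$. Each intermediate state (labeled by an $(n-1)$-gram of transition names of $\scrA$) has exactly $r$ incoming and $r$ outgoing transitions, so an easy induction on the layer index gives that every intermediate state accumulates node-weight $1/r^{n-1}$, and hence each of its $r$ outgoing transitions again carries flow $1/r^n$. Since $w^*(e') = 1$ on the $K = \ell - n + 1$ transitions along $\pi^*$ and $w_1(e') = 1/r^n$ on every transition, this gives
\[
D(\wvec^* \,\|\, \wvec_1) \;=\; \sum_{e' \in \pi^*} \log \frac{1}{1/r^n} \;=\; (\ell - n + 1)\, n \log r .
\]

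Substituting this value in place of $K \log(KM)$ in the Component Hedge bound, which has the form $\sqrt{2\, T B^2 K \cdot D(\wvec^* \,\|\, \wvec_1)} + B\, D(\wvec^* \,\|\, \wvec_1)$, then yields the claimed bound directly. The only real work is the uniform edge-flow verification above: it is what distinguishes the ensemble $\scrA'$ from the general case and is precisely what eliminates the extra $\log(\ell - n + 1)$ factor that would otherwise appear inside the logarithm coming from the worst-case $\log(KM)$ estimate.
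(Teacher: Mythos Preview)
Your proposal is correct and follows essentially the same route as the paper: both instantiate the Component Hedge bound $\sqrt{2TKB^2\,\Delta(\vvec_{\pi^*}\|\wvec_1)}+B\,\Delta(\vvec_{\pi^*}\|\wvec_1)$ with the initial flow $\wvec_1=\tfrac{1}{r^n}\mathbf{1}$, verify it lies in the unit-flow polytope via the layered $r$-in/$r$-out structure of $\scrA'$, and obtain $\Delta=(\ell-n+1)\,n\log r$. The only cosmetic difference is that the paper writes out the unnormalized relative entropy $\sum_{e'} w_{e'}\ln\tfrac{w_{e'}}{\widehat w_{e'}}+\widehat w_{e'}-w_{e'}$, whereas you drop the linear terms; those do vanish here (since $\sum_{e'}w^*_{e'}=\sum_{e'}w_{1,e'}=K$), but it is worth stating this cancellation explicitly.
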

\begin{proof}
First, note that all paths in $\scrA'$ have equal length of $K=\ell - n +1$.
Therefore there is no need of modifying $\scrA'$ to make all paths of the same length.
At each trial $t \in [T]$, we define the loss of each transition as $\ell_{t,e'}:=B- g_{t,e'}$.
Extending the results of \cite{KoolenWarmuthKivinen2010}, the general regret bound of CDCH is 
\begin{equation}
\label{eq:bounds-cdch}
\text{Regret}_{CDCH} \leq \sqrt{2 \, T \, K \, B^2 \, \Delta( \vvec_{\pi^*} || \wvec_1)} + B \, \Delta( \vvec_{\pi^*} || \wvec_1),
\end{equation}
where $\vvec_{\pi^*} \in \{0,1\}^M$ is a bit vector representation of the best comparator $\pi^*$, 
$\wvec_1 \in [0,1]^M$ is the initial weight vector in the unit-flow polytope,
and
$$
\Delta( \wvec || \wvechat) 
:=
\sum_{e' \in E'}
  \Bigg( w_{e'} \ln \frac{w_{e'}}{\widehat{w}_{e'}} + \widehat{w}_{e'} -
  w_{e'}\Bigg).
$$
Since the initial state has $r^n$ outgoing transitions, 
and all the intermediate states have $r$ incoming and outgoing transitions, 
the initial vector $\wvec_1= \frac{1}{r^n} \mathbf{1}$ falls into the unit-flow polytope,
where $\mathbf{1}$ is a vector of all ones.
Also $\vvec_{\pi^*}$ has exactly $K=\ell - n +1$ many ones.
Therefore:
\begin{equation}
\label{eq:delta-cdch}
\Delta( \vvec_{\pi^*} || \wvec_1) = (\ell - n +1) \, n \, \log r
\end{equation}

Combining the Equations (\ref{eq:bounds-cdch}) and (\ref{eq:delta-cdch}) gives us the desired regret bound.

\vspace{-.5cm}

\end{proof}

\subsection{Semi-Bandit: Context-dependent Semi-Bandit Algorithm}

In order to apply the algorithm CDSB in semi-bandit setting in this
application, we need to introduce a set of ``covering paths'' $C$ in
$\scrA'$.  We introduce $C$ by partitioning all the transitions in
$\scrA'$ into $r^n$ paths of length $\ell - n + 1$ iteratively as
follows.  At each iteration, choose an arbitrary path $\pi$ from the
initial state to a final state.  Add $\pi$ to the set $C$ and remove
all its transitions from $\scrA'$.  Notice that the number of incoming
and outgoing transitions for each intermediate state are always equal
throughout the iterations.  Also note that in each iteration, the
number of outgoing edges from the initial state decreases by one.
Therefore after $r^n$ iterations, $C$ contains a set of $r^n$ paths
that partition the set of transitions in $\scrA'$.

Furthermore, observe that the number of paths in $\scrA'$ and $\scrA$
are the same and equal to $N = r^\ell$.  The Corollary below is a
direct result of Theorem \ref{thm:cdsb} with $|C| = r^n$.

%
%
%

\begin{corollary}
For any $\delta \in (0,1)$, with proper tuning, the regret of the CDSB algorithm can be bounded, with probability $1-\delta$, as:
\begin{align*}
\text{Regret}_{CDSB}
\leq  2 \, B \, (\ell - n +1) \, \sqrt{T } \left( \sqrt{4 \, r^n \, \ell \, \ln  r}  + \sqrt{ r^n  \ln \frac{ r^n (\ell - n +1)}{\delta}} \right).
\end{align*}
\end{corollary}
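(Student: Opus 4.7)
The corollary is essentially an instantiation of Theorem~\ref{thm:cdsb}: it asserts what the general CDSB regret bound becomes once all the structural parameters ($K$, $N$, $M$, $|C|$) are read off from the ensemble-structured-prediction automaton $\scrA'$ depicted in Figure~\ref{fig:ensemble-a'}. So the plan is simply to identify these parameters, substitute them into the bound of Theorem~\ref{thm:cdsb}, and simplify.

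The preceding text already establishes four of the needed quantities. From the explicit layered construction of $\scrA'$ we have $K = \ell - n + 1$ (all paths have equal length equal to the number of layers), $N = r^\ell$ (one path per choice of substructure per layer, matching $\scrA$), and $M = r^n(\ell - n + 1)$ (each of the $K$ layers contributes $r^n$ transitions). The remaining quantity $|C|$ is handled in the paragraph immediately before the corollary, where a set of $r^n$ covering paths is produced by a greedy peeling argument: since every intermediate state always has equal in- and out-degree, each peeled path starts from the initial state, ends at a final state, and reduces the initial out-degree by one, so after $r^n$ peelings every transition of $\scrA'$ has been covered exactly once. Therefore $|C| = r^n$.

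With these four substitutions the bound of Theorem~\ref{thm:cdsb} becomes
\[
2 B \sqrt{T K}\,\Bigl(\sqrt{4 K |C| \ln N} + \sqrt{M \ln(M/\delta)}\Bigr)
=
2 B \sqrt{T(\ell-n+1)}\,\Bigl(\sqrt{4(\ell-n+1) r^n \ell \ln r} + \sqrt{r^n(\ell-n+1) \ln\tfrac{r^n(\ell-n+1)}{\delta}}\Bigr),
\]
using $\ln N = \ln(r^\ell) = \ell \ln r$. Factoring the common $\sqrt{\ell-n+1}$ out of both terms inside the parenthesis combines with the outer $\sqrt{\ell-n+1}$ to give an overall factor of $(\ell-n+1)$, yielding precisely the stated bound.

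There is no real obstacle here: the whole content of the corollary is the parameter count. The only step that deserves a moment of care is justifying $|C| = r^n$ via the greedy argument above, since one must check that the peeling process really terminates after exactly $r^n$ rounds and that every transition is covered; this is immediate from the constant degree structure of the intermediate layers. Everything else is algebraic simplification of Theorem~\ref{thm:cdsb}.
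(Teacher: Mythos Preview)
Your proposal is correct and matches the paper's own justification exactly: the paper simply states that the corollary is ``a direct result of Theorem~\ref{thm:cdsb} with $|C| = r^n$,'' relying on the previously computed values $K = \ell - n + 1$, $N = r^\ell$, and $M = r^n(\ell - n + 1)$. Your substitution and the factoring of $\sqrt{\ell-n+1}$ from both terms to produce the final $(\ell-n+1)$ factor are precisely the intended simplification.
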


\subsection{Full Bandit: Context-dependent ComBand Algorithm}
We apply the CDCB algorithm to this application in full bandit setting.
The Corollary below, which is a direct result of Theorem \ref{thm:cdcb},
give regret guarantee for CDCB algorithm.

\begin{corollary}
Let $\lambda_\text{min}$ denote the smallest non-zero eigenvalue of
$\mathbb{E}[\vvec_\pi \vvec_\pi^T]$
where $\vvec_\pi \in \{0,1\}^{M}$ is the bit representation of the path $\pi$
which is distributed according to the uniform distribution $\mu$.
With proper tuning, the regret
of CDCB can be bounded as follows:
\begin{align*}
\text{Regret}_{CDCB}
\leq 2 \, B \, \sqrt{T \, \Bigg( \frac{2(\ell - n + 1)}{r^n (\ell - n +1) \lambda_\text{min}} + 1 \Bigg) \, r^n \, (\ell - n +1) \, \ell \, \ln r  }.
\end{align*}
\end{corollary}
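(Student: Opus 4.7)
The plan is to observe that this corollary is a direct specialization of Theorem~\ref{thm:cdcb} to the ensemble structured prediction instance, obtained by substituting the parameters of the context-dependent automaton $\scrA'$ described in this appendix. The only substantive work is identifying $K$, $M$, and $N$ from the explicit construction of $\scrA'$ depicted in Figure~\ref{fig:ensemble-a'} and verifying that no reshaping of $\scrA'$ is required before invoking Theorem~\ref{thm:cdcb}.

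First, I would read off the three structural parameters of $\scrA'$ from the ensemble construction already established in this appendix: the length of the longest path is $K = \ell - n + 1$ (in fact, every accepting path has exactly this length, since $\scrA'$ has $\ell - n + 1$ layers and each path traverses one transition per layer); the number of transitions is $M = r^n(\ell - n + 1)$ (each of the $\ell - n + 1$ layers contributes $r^n$ transitions, one for each $n$-gram of expert substructures); and the number of accepting paths coincides with that of $\scrA$, namely $N = r^\ell$ (via the one-to-one correspondence of Proposition~\ref{prop:1-1}, together with the fact that $\scrA$ has $r$ choices in each of the $\ell$ substructure positions).

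Next, I would note that because all paths in $\scrA'$ already share the common length $K = \ell - n + 1$, the padding with zero-gain transitions used in the general construction is unnecessary here; hence the parameters above can be plugged directly into Theorem~\ref{thm:cdcb} without any modification. Substituting $K = \ell - n + 1$, $M = r^n(\ell - n + 1)$, and $\ln N = \ell \ln r$ into the general regret bound
\[
2\,B\,\sqrt{\bigg(\frac{2K}{M\lambda_{\min}} + 1\bigg)\,T\,M\,\ln N}
\]
yields
\[
2\,B\,\sqrt{\bigg(\frac{2(\ell - n + 1)}{r^n(\ell - n + 1)\lambda_{\min}} + 1\bigg)\,T\,r^n(\ell - n + 1)\,\ell\,\ln r},
\]
which is exactly the claimed bound. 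The $\lambda_{\min}$ quantity is inherited unchanged from Theorem~\ref{thm:cdcb}, since the sampling distribution $\mu$ and the bit-representation $\vvec_\pi$ are defined with respect to the same $\scrA'$.

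There is essentially no technical obstacle here: the only thing that could go wrong is mis-identifying one of $K$, $M$, $N$, so the care needed is in justifying these counts from the layered structure of $\scrA'$. In particular, the layer-count argument for $M$ and the fact that paths have constant length $K$ rely on inspecting Figure~\ref{fig:ensemble-a'}; both have already been established explicitly earlier in this appendix, so this proof reduces to a clean substitution.
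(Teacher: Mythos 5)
Your proposal is correct and matches the paper's treatment: the paper presents this corollary as a direct consequence of Theorem~\ref{thm:cdcb}, obtained by substituting the parameters $K = \ell - n + 1$, $M = r^n(\ell - n + 1)$, and $N = r^\ell$ already computed for the ensemble construction of $\scrA'$ in this appendix. Your identification of these quantities and the resulting substitution into the general bound are exactly what the paper intends.
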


\section{Path Learning for Full Bandit and Arbitrary Gain}
\label{sec:exp3}

So far, we have discussed the count-based gains as a wide family of
non-additive gains in full information, semi- and full bandit
settings.  We developed learning algorithms that exploit the special
structure of count-based gains.  In this section, we go beyond the
count-based gains in the full bandit setting and consider the scenario
where the gain function is arbitrary and admits no known structure.
In other words, any two paths can have completely independent gains
regardless of the number of overlapping transitions they may share.
Clearly, the CDCB algorithm of Section~\ref{sec:algs} cannot be
applied to this case as it is specialized to (gappy) count-based
gains.  However, we present a general algorithm for path learning in
the full bandit setting, when the gain function is arbitrary.  This
algorithm (called \EXPAG) admits weaker regret bounds since no special
structure of the gains can be exploited in the absence of any
assumption.  The algorithm is essentially an efficient implementation
of \EXP\ for path learning with arbitrary gains using weighted
automata and graph operations.

We start with a brief description of the EXP3 algorithm of
\cite{auer2002nonstochastic}, which is an online learning algorithm
designed for the full bandit setting over a set of $N$ experts.  The
algorithm maintains a distribution $\wvec_t$ over the set of experts,
with $\wvec_1$ initialized to the uniform distribution.  At each round
$t \in [T]$, the algorithm samples an expert $I_t$ according to
$\wvec_t$ and receives (only) the gain $g_{t, I_t}$ associated to that
expert.  It then updates the weights multiplicatively via the rule
$w_{t+1,i} \propto w_{t,i} \, \exp ( \eta \, \gtil_{t,i})$ for all $i
\in [N]$, where 
$\gtil_{t,i} = \frac{{g_{t,i}}}{{w_{t,i}}} \mathbf{1}\{I_t = i\}$
is an unbiased surrogate gain associated with expert $i$.
The weights $w_{t+1,i}$ are then normalized to sum to
one.\footnote{ The original EXP3 algorithm of
  \cite{auer2002nonstochastic} mixes the weight vector with the
  uniform distribution in each trial.  Later
  \cite{stoltz2005information} showed that the mixing step is not
  necessary.}

In our learning scenario, each expert is a path in $\scrA$. Since the
number of paths is exponential in the size of $\scrA$,
maintaining a weight per path is computationally intractable. 
We cannot exploit the properties of the gain function
since it does not admit any known structure. However, we
can make use of the graph representation of the experts.  
We will show that the weights of
the experts at round $t$ can be compactly represented by a
deterministic \emph{weighted finite automaton} (WFA) $\scrW_t$. We
will 
further show that sampling a path from $\scrW_t$ and updating $\scrW_t$ can be done efficiently.

A deterministic WFA $\scrW$ is a deterministic finite automaton whose
transitions and final states carry weights.  Let $w(e)$ denote
the weight of a transition $e$ and $w_f(q)$ the weight at a final
state $q$.  The weight $\scrW(\pi)$ of a path $\pi$ ending in a final
state is defined as the product of its constituent transition weights
and the weight at the final state: $\scrW(\pi) := (\prod_{e \in \pi}
w(e)) \cdot w_f(\text{dest}(\pi))$, where $\text{dest}(\pi)$ denotes the
destination state of $\pi$.

Sampling paths from a deterministic WFA $\scrW$ is straightforward
when it is \emph{stochastic}, that is when the weights of all outgoing
transitions and the final state weight (if the state is final) sum to
one at every state: starting from the initial state, we can randomly
draw a transition according to the probability distribution defined by
the outgoing transition weights and proceed similarly from the
destination state of that transition, until a final state is reached.
The WFA we obtain after an update may not be stochastic, but we can
efficiently compute an equivalent stochastic WFA $\scrW'$ from any
$\scrW$ using the \emph{weight-pushing} algorithm \citep{Mohri1997,
  Mohri2009, TakimotoWarmuth2003}: $\scrW'$ admits the same states and
transitions as $\scrW$ and assigns the same weight to a path from the
initial state to a final state; but the weights along paths are
redistributed so that $\scrW'$ is stochastic. For an acyclic input WFA
such as those we are considering, the computational complexity of
weight-pushing is linear in the sum of the number of states and
transitions of $\scrW$, see the Appendix~\ref{app:fst} for details.

We now show how $\scrW_t$ can be efficiently updated using the
standard WFA operation of \emph{intersection} (or \emph{composition})
with a WFA $\scrV_t$ representing the multiplicative weights that we
will refer to as the \emph{update WFA} at time $t$.

\begin{figure}
 \centering
\begin{minipage}[R]{0.45\textwidth}
\scalebox{.65}{
\begin{tikzpicture}%
  [>=stealth,
   shorten >=1pt,
   node distance=2cm,
   on grid,
   auto,
   every state/.style={draw=black!60, fill=black!5 }
  ]
  
\tikzset{cutting/.style = {->,> = latex, very thick, yellow}}
    \tikzset{rectangle/.style={draw=gray,dashed,fill=green!1,thick,inner sep=5pt}}

\node[text width=1cm] at (-.7,-1.2) {\LARGE $ \mathcal{V}_t$};

\node[state, accepting, very thick] (0)                  {$\mathbf{0 \vert \Blue{1}}$};
\node[state, accepting] (1) [right=of 0] {$\mathbf{1 \vert \Blue{1}}$};
\node[state, accepting] (2) [right=of 1] {$\mathbf{2 \vert \Blue{1}}$};
\node[state, accepting] (99) [below = of 2] {\textbf{else}$\mathbf{ \vert \Blue{1}}$};
\node[state, accepting] (3) [right=of 2] {$\mathbf{\cdots}$};
\node[state, accepting] (4) [right=of 3] {$\mathbf{k \vert \Blue{\exp(\frac{\eta \, g_{\pi_t,t}}{\scrW_t(\pi_t)})}}$};

\path[->]
	(0) edge	node [above]  {$\mathbf{e_1 \vert \Blue{1}}$} (1)
		edge	node [below] {$\mathbf{\Green{\rho} \vert \Blue{1}}$} (99)
	(1) edge	node [below]  {$\mathbf{e_2 \vert \Blue{1}}$} (2)
		edge	node [below, near start] {$\mathbf{\Green{\rho} \vert \Blue{1}}$} (99)
	(2) edge	node [below] {$\mathbf{e_3 \vert \Blue{1}}$} (3)
		edge node [below, near start] {$\mathbf{\Green{\rho} \vert \Blue{1}}$} (99)
	(3) edge	node [below] {$\mathbf{e_k \vert \Blue{1}}$} (4)
	edge node [below, near start] {$\mathbf{\Green{\rho} \vert \Blue{1}}$} (99)
	(4) edge	node [below, near start] {$\mathbf{E \vert \Blue{1}}$} (99)
	(99) edge[loop left]	node [below] {$\mathbf{E \vert \Blue{1}}$} (99)
   ;
   

\end{tikzpicture}
}\caption{
The update WFA $\scrV_t$.
The \Blue{weight} of each state and transition is written next to its name separated 
by ``$\vert$'': $\xrightarrow{\;e \vert \Blue{\text{weight}}\;}$
}
\label{fig:updateWFA}
 \end{minipage}
 \hfill
 \centering
\begin{minipage}[R]{0.5\textwidth}
\begin{algorithmic}[1]
\State $\scrW_1 \longleftarrow \scrA \quad $
\State For $t = 1, \ldots, T$
\State \quad $\scrW_t \longleftarrow  \Call{WeightPush}{\scrW_t}$
\State \quad $\pi_t \longleftarrow  \Call{Sample}{\scrW_t}$
\State \quad $g_{\pi_t,t} \longleftarrow  \Call{ReceiveGain}{\pi_t}$
\State \quad $\scrV_t \longleftarrow  \Call{UpdateWFA}{\pi_t, \scrW_t(\pi_t), g_{t,\pi_t}}$
\State \quad $\scrW_{t+1} \longleftarrow \scrW_{t} \circ \scrV_t$
\end{algorithmic}
\caption{Algorithm \EXPAG}
\label{alg:exp3}
\end{minipage}
\end{figure}

$\scrV_t$ is a deterministic WFA that assigns weight
$\exp(\eta \, \widetilde{g}_{t, \pi})$ to path $\pi$. Thus, since
$\widetilde{g}_{t, \pi} = 0$ for all paths but the path $\pi_t$
sampled at time $t$, $\scrV_t$ assigns weight $1$ to all paths
$\pi \neq \pi_t$ and weight
$\exp\big( \frac{\eta g_{t, \pi_t}}{\scrW_t(\pi_t)} \big)$ to $\pi_t$.
$\scrV_t$ can be constructed deterministically as illustrated in
Figure~\ref{fig:updateWFA}, using \emph{$\rho$-transitions} (marked
with $\Green{\rho}$ in \Green{green}). A $\rho$-transition admits the
semantics of the \emph{rest}: it matches any symbol that is not
labeling an existing outgoing transition at that state. For example,
the $\rho$-transition at state $1$ matches any symbol other than
$e_2$.  $\rho$-transitions lead to a more compact representation not
requiring the knowledge of the full alphabet. This further helps speed
up subsequent intersection operations \citep{openfst}.

To update the weights $\scrW_t$, we use the \emph{intersection} (or
\emph{composition}) of WFAs. By definition, the intersection of
$\scrW_t$ and $\scrV_t$ is a WFA denoted by $(\scrW_t \circ \scrV_t)$
that assigns to each path expert $\pi$ the product of the weights
assigned by $\scrW_t$ and $\scrV_t$:\footnote{The terminology of
  intersection is motivated by the case where the weights are either
  $0$ or $1$, in which case the set of paths with non-zero weights in
  $\scrW_t \circ \scrV_t$ is the intersection of the sets of paths
  with with weight $1$ in $\scrW_t$ and $\scrV_t$.}
\[
  \forall \pi \in \scrP \Colon \quad (\scrW_t \circ \scrV_t)(\pi)
  = \scrW_t(\pi) \cdot \scrV_t(\pi).
\]
There exists a general an efficient algorithm for computing the
intersection of two WFAs
\citep{PereiraRiley1997,MohriPereiraRiley1996,Mohri2009}: the states
of the intersection WFA are formed by pairs of a state of the first
WFA and a state of the second WFA, and the transitions obtained by
matching pairs of transitions from the original WFAs, with their
weights multiplied, see Appendix~\ref{app:fst} for more details.
Since both $\scrW_t$ and $\scrV_t$ are deterministic, their
intersection $(\scrW_t \circ \scrV_t)$ is also deterministic
\citep{CortesKuznetsovMohriWarmuth2015}.

The following lemma (proven in Appendix~\ref{app:proofs})
shows that the weight assigned by \EXPAG\ to each
path expert coincides with those defined by \EXP.

\begin{lemma}
\label{lemma:exp3}
At each round $t \in [T]$ in \EXPAG, the following properties hold for
$\scrW_t$ and $\scrV_t$:
\begin{equation*}
\scrW_{t+1}(\pi) \propto \exp({ \eta \, \sum_{s=1}^t \widetilde{g}_{s, \pi}}), \quad
\scrV_t(\pi) = \exp(\eta \, \widetilde{g}_{t, \pi}),
\quad \text{s.t.\ } 
\widetilde{ g}_{s,\pi} = ({g_{s,\pi}}/{\scrW_s({\pi})}) \cdot \mathbf{1}\{\pi= \pi_s\}.
\end{equation*}
\end{lemma}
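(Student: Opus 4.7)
The plan is to prove both claims simultaneously by induction on $t$, using two ingredients: the explicit construction of the update WFA $\scrV_t$ in Figure~\ref{fig:updateWFA}, and the fundamental property that WFA intersection multiplies path weights, i.e. $(\scrW_t \circ \scrV_t)(\pi) = \scrW_t(\pi) \cdot \scrV_t(\pi)$ for every accepting path $\pi$.

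First I would verify the claim about $\scrV_t$ directly from its construction. By inspection of Figure~\ref{fig:updateWFA}, the only accepting path that traverses the ``spine'' $e_1, e_2, \ldots, e_k$ (with $k = |\pi_t|$) and terminates at the distinguished final state is $\pi_t$ itself, and its weight is the product of the transition weights (all $1$) times the final state weight $\exp(\eta\, g_{t,\pi_t}/\scrW_t(\pi_t))$. Every other accepting path $\pi \neq \pi_t$ is routed through at least one $\rho$-transition into the ``else'' sink state, whose transition and final weights are all $1$, so $\scrV_t(\pi) = 1$. Since $\widetilde{g}_{t,\pi} = (g_{t,\pi}/\scrW_t(\pi))\mathbf{1}\{\pi = \pi_t\}$, this establishes $\scrV_t(\pi) = \exp(\eta\, \widetilde{g}_{t,\pi})$ for every accepting path $\pi$.

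Next I would prove by induction on $t \geq 1$ that $\scrW_{t+1}(\pi) \propto \exp(\eta \sum_{s=1}^t \widetilde{g}_{s,\pi})$. For the base case $t = 0$, we have $\scrW_1 = \scrA$, which assigns weight $1$ to every accepting path and thus trivially satisfies the claim (with the empty sum equal to $0$). For the inductive step, assume $\scrW_t(\pi) \propto \exp(\eta \sum_{s=1}^{t-1} \widetilde{g}_{s,\pi})$. The weight-pushing step in line~3 of Algorithm~\ref{alg:exp3} preserves path weights by construction (this is the defining invariant of weight-pushing, see \citep{Mohri1997}), so the proportionality still holds for the pushed $\scrW_t$ used in the update. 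By the multiplicative property of intersection,
\[
\scrW_{t+1}(\pi) = (\scrW_t \circ \scrV_t)(\pi) = \scrW_t(\pi) \cdot \scrV_t(\pi) \propto \exp\!\Bigl(\eta \sum_{s=1}^{t-1} \widetilde{g}_{s,\pi}\Bigr) \cdot \exp(\eta\, \widetilde{g}_{t,\pi}) = \exp\!\Bigl(\eta \sum_{s=1}^{t} \widetilde{g}_{s,\pi}\Bigr),
\]
completing the induction.

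The main subtlety, and the one potential obstacle, is ensuring that weight-pushing does not alter path weights: the algorithm redistributes weights locally so that $\scrW_t$ becomes stochastic (enabling sampling), but the product of weights along any accepting path is invariant. I would cite this standard property of weight-pushing rather than re-derive it. A minor additional point worth addressing is that the proportionality constant absorbs the normalization introduced by weight-pushing, which is exactly why the claim is stated up to proportionality; this matches the unnormalized multiplicative form of the EXP3 update before renormalizing to a probability distribution over paths.
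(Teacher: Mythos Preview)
Your proposal is correct and follows essentially the same argument as the paper: first verify $\scrV_t(\pi)=\exp(\eta\,\widetilde g_{t,\pi})$ by direct inspection of the construction in Figure~\ref{fig:updateWFA}, then run the obvious induction on $t$ using $(\scrW_t\circ\scrV_t)(\pi)=\scrW_t(\pi)\cdot\scrV_t(\pi)$. Your explicit remark that weight-pushing only rescales all path weights by a common factor (hence preserves the proportionality) is a useful clarification that the paper leaves implicit.
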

Figure~\ref{alg:exp3} gives the pseudocode of \EXPAG.  The time
complexity of \EXPAG\ is dominated by the cost of the intersection
operation (line $7$).  The worst-case space and time complexity of the
intersection of two deterministic WFA is linear in the size of the
automaton the algorithm returns.  Due to the specific structure of
$\scrV_t$, the size of $\scrW_{t} \circ \scrV_t$ can be shown to be at
most $O(|\scrW_{t}| + |\scrV_t|)$ where $|\scrW_t|$ is the sum of the
number of states and transitions in $\scrW_t$.  This is significantly
better than the worst case size of the intersection in general (i.e.\
$O(|\scrW_t| |\scrV_t|)$.  Recall that $\scrW_{t+1}$ is deterministic.
Thus, unlike the algorithms of \cite{CortesKuznetsovMohriWarmuth2015},
no further determinization is required.  The following Lemma
guarantees the efficiency of \EXPAG\ algorithm.  See
Appendix~\ref{app:proofs} for the proof.

\begin{lemma}
\label{lemma:exp3-time}
The time complexity of \EXPAG\ at round $t$ is in
$O(|\scrW_{t}| + |\scrV_t|)$.  Moreover, in the worst case, the growth
of $|\scrW_t|$ over time is at most linear in $K$ where $K$ is the
length of the longest path in $\scrA$.
\end{lemma}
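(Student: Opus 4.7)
The plan is to identify the bottleneck per round of \EXPAG\ and then bound the size of the intermediate automata. Among the operations in Figure~\ref{alg:exp3}, the weight-pushing step, the sampling step, and the construction of $\scrV_t$ are each linear in the size of their input (weight-pushing is linear on acyclic WFAs, sampling requires traversing a single path from the initial state, and $\scrV_t$ is built directly from the known $\pi_t$). Consequently, the running time is dominated by the intersection $\scrW_t \circ \scrV_t$ on line 7. Invoking the standard intersection algorithm for WFAs \citep{PereiraRiley1997,MohriPereiraRiley1996,Mohri2009}, whose worst-case time and space complexity is linear in the size of the output WFA, it therefore suffices to bound $|\scrW_{t+1}| = |\scrW_t \circ \scrV_t|$ by $O(|\scrW_t| + |\scrV_t|)$.

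To control this, I would exploit the very specific shape of $\scrV_t$ (Figure~\ref{fig:updateWFA}). It consists of a ``spine'' of $k+1 \leq K+1$ states reading the transitions of $\pi_t$ in order, together with a single absorbing \emph{else} state accessed via $\rho$-transitions from each spine state and looping back to itself. Thus $|\scrV_t| = O(K)$. States of $\scrW_t \circ \scrV_t$ are pairs $(q, q')$ with $q$ a state of $\scrW_t$ and $q'$ a state of $\scrV_t$. I would split reachable pairs into two groups: (i) pairs in which $q'$ lies on the spine; since $\scrW_t$ is deterministic and the spine forces the second coordinate to advance only when the matched transition label agrees with the corresponding edge of $\pi_t$, at most one state $q$ of $\scrW_t$ can be paired with the $i$-th spine state, giving at most $K+1$ such pairs; (ii) pairs with $q' = \text{else}$, of which there are at most $|\scrW_t|$, since the else state is absorbing and the first coordinate simply follows transitions inherited from $\scrW_t$. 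Hence $|\scrW_t \circ \scrV_t| \leq |\scrW_t| + O(K)$, which yields both the per-round complexity $O(|\scrW_t| + |\scrV_t|)$ and the worst-case growth of $|\scrW_t|$ by at most $O(K)$ per round.

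The main obstacle will be carefully justifying the two-category decomposition in the presence of the $\rho$-transitions: one has to verify that matching a $\rho$-edge of $\scrV_t$ against an outgoing edge of $q$ in $\scrW_t$ never produces multiple copies of the same first coordinate, and that once the second coordinate enters the else state it never leaves, so that the else-branch of the intersection is a faithful replica of $\scrW_t$ without duplication. Once this reachability argument is in place, both claims of the lemma follow directly, and combining with the linear cost of weight-pushing and sampling gives the stated per-round time bound.
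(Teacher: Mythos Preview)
Your proposal is correct and mirrors the paper's own argument: the paper likewise reduces the per-round cost to the intersection step and bounds $|\scrW_t \circ \scrV_t|$ by observing that determinism of $\scrW_t$ forces each non-\emph{else} state of $\scrV_t$ to pair with a single state of $\scrW_t$, while the \emph{else} state pairs with at most all of $\scrW_t$, yielding $|\scrW_t \circ \scrV_t| \leq |\scrW_t| + |\scrV_t|$ and hence growth $O(K)$ per round. Your treatment is slightly more explicit about the spine/else split and the role of $\rho$-transitions, but the underlying decomposition and conclusion are identical.
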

The following upper bound holds for the regret of \EXPAG, as a direct
consequence of existing guarantees for \EXP\
\citep{auer2002nonstochastic}.

\begin{theorem}
\label{thm:exp3reg}
Let $U > 0$ be an upper bound on all path gains: $g_{t, \pi} \leq U$
for all $t \in [T]$ and all path $\pi$. Then, the regret of
\EXPAG\ with $N$ path experts is upper bounded by $U
\sqrt{2 \, T \, N \log N}$.
\end{theorem}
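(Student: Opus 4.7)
The plan is to reduce the regret analysis of \EXPAG\ to that of the standard \EXP\ algorithm by showing they produce identical distributions over the $N$ path experts, and then invoking the known regret bound for \EXP.

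First, I would use Lemma~\ref{lemma:exp3} to establish that, at every round $t$, the weight assigned to each path expert $\pi$ by the WFA $\scrW_t$ is proportional to $\exp\bigl(\eta \sum_{s=1}^{t-1} \widetilde{g}_{s,\pi}\bigr)$, which is exactly the unnormalized EXP3 weight. Since $\scrW_t$ is normalized (after weight-pushing) into a stochastic WFA from which paths are sampled proportionally to their total weight, the induced sampling distribution over the $N$ paths coincides with the EXP3 distribution on $N$ experts whose gains are $g_{t,\pi} \in [0,U]$. Furthermore, the surrogate gains used in the update, $\widetilde{g}_{t,\pi} = (g_{t,\pi}/\scrW_t(\pi))\mathbf{1}\{\pi=\pi_t\}$, match the standard EXP3 importance-weighted estimator.

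Given this equivalence, the trajectory of (distribution, sampled expert, observed gain, surrogate-gain update) produced by \EXPAG\ is indistinguishable in law from that produced by \EXP\ run over the $N$ path experts with gain upper bound $U$. Therefore I can import the standard EXP3 regret guarantee of \cite{auer2002nonstochastic}: with $\eta$ tuned appropriately (e.g.\ $\eta = \sqrt{2\log N / (T N)}/U$, yielding the scaling $U\sqrt{2TN\log N}$ for gains bounded by $U$), the expected regret against any fixed expert $\pi^\ast$ is bounded by $U\sqrt{2\,T\,N\log N}$.

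The main (minor) obstacle is bookkeeping: one must verify that the gains are indeed in $[0,U]$ (so that the surrogate gains and EXP3 analysis apply without sign adjustments), and that the weight-pushing step does not alter the pathwise distribution --- only its local factorization --- so that the sampled path indeed has distribution proportional to $\scrW_t(\pi)$ as required by the EXP3 analysis. Both points are ensured by the non-negativity of count-based/arbitrary gains assumed in the setup and by the defining property of weight-pushing stated earlier. With these verifications in place, the bound $U\sqrt{2TN\log N}$ follows directly from the \EXP\ regret theorem.
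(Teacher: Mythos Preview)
Your proposal is correct and mirrors the paper's own reasoning: the paper does not give a separate proof but states the bound as a direct consequence of the \EXP\ guarantees of \cite{auer2002nonstochastic}, relying on Lemma~\ref{lemma:exp3} to identify \EXPAG's path distribution and updates with those of \EXP\ on $N$ experts. Your additional bookkeeping remarks (weight-pushing preserves path probabilities, gains lie in $[0,U]$) are exactly the sanity checks that justify this reduction.
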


The $\sqrt{N}$ dependency of the bound suggests that the guarantee
will not be informative for large values of $N$.  However, the
following known lower bound shows that, in the absence of any
assumption about the structure of the gains, this dependency cannot be
improved in general \citep{auer2002nonstochastic}.

\begin{theorem}
\label{thm:exp3low}
Let $U > 0$ be an upper bound on all path gains: $g_{t, \pi} \leq U$
for all $t \in [T]$ and all path $\pi$.  Then, For any number of path
experts $N \geq 2$ there exists a distribution over the assignment of
gains to path experts such that the regret of any algorithm is at
least $\frac{1}{20} U \min \{ \sqrt{T \, N}, T\}$.
\end{theorem}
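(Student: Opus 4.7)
The plan is to follow the classical lower bound argument of \cite{auer2002nonstochastic}, rescaled by the gain bound $U$, and to note that the path-learning setting offers no additional freedom beyond having $N$ experts: we simply assign gains directly to the $N$ paths without using any structural relationship, which is exactly the worst case that the EXP3 lower bound already covers.

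First I would reduce the problem to a stochastic multi-armed bandit with $N$ arms and payoffs in $[0,U]$. Concretely, I would fix $\varepsilon \in (0,1/2]$ to be chosen later, pick a hidden index $i^* \in [N]$ uniformly at random, and, independently across trials, let the gain of path expert $i$ be a Bernoulli random variable rescaled by $U$ with mean $\tfrac{U}{2}$ if $i \ne i^*$ and mean $\tfrac{U}{2}(1+\varepsilon)$ if $i = i^*$. Since a lower bound for this stochastic distribution over gain assignments is a lower bound on the worst-case regret of any online algorithm against the oblivious adversary, it suffices to lower bound the expected pseudo-regret in this family.

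Next I would carry out the standard information-theoretic step. Let $\mathbb{P}_{i^*}$ denote the law of the learner's observed history under the environment indexed by $i^*$, and let $\mathbb{P}_0$ denote the law under the ``null'' environment where every path pays $\tfrac{U}{2}$ in expectation. Let $N_i(T)$ be the number of rounds on which expert $i$ is sampled. A chain-rule computation, together with the fact that the learner only sees the gain of the sampled expert, gives
\begin{equation*}
\mathrm{KL}\bigl(\mathbb{P}_0 \,\|\, \mathbb{P}_{i^*}\bigr) \;\le\; \mathbb{E}_0[N_{i^*}(T)] \cdot \mathrm{KL}\bigl( \mathrm{Ber}(\tfrac12) \,\|\, \mathrm{Ber}(\tfrac12(1+\varepsilon)) \bigr) \;\le\; c\,\varepsilon^2 \,\mathbb{E}_0[N_{i^*}(T)],
\end{equation*}
for a small absolute constant $c$. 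Pinsker's inequality (or equivalently the variational bound used in Auer et al.) then yields
\begin{equation*}
\mathbb{E}_{i^*}[N_{i^*}(T)] \;\le\; \mathbb{E}_0[N_{i^*}(T)] + T\sqrt{\tfrac{c}{2}\,\varepsilon^2\,\mathbb{E}_0[N_{i^*}(T)]}.
\end{equation*}
Averaging over $i^* \in [N]$ uses $\sum_{i^*} \mathbb{E}_0[N_{i^*}(T)] = T$ and Jensen's inequality, which gives an upper bound on $\tfrac{1}{N}\sum_{i^*} \mathbb{E}_{i^*}[N_{i^*}(T)]$ of the form $\tfrac{T}{N} + \varepsilon T \sqrt{T/(2N)} \cdot \sqrt{c}$. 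Since the per-round pseudo-regret in environment $i^*$ is $\tfrac{U\varepsilon}{2}\bigl(1 - N_{i^*}(T)/T\bigr)$, the average (over $i^*$) expected regret is at least
\begin{equation*}
\tfrac{U\varepsilon}{2}\,\Bigl(T - \tfrac{T}{N} - \varepsilon T\sqrt{cT/(2N)}\Bigr).
\end{equation*}

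Finally I would optimize. Choosing $\varepsilon$ of order $\min\{1,\sqrt{N/T}\}$ makes the parenthesized quantity at least a constant fraction of $T$, so the bound becomes of order $U\,\min\{T,\,\sqrt{TN}\}$, and a careful bookkeeping of the constants (as done in Auer et al.) yields the prefactor $\tfrac{1}{20}$. Since the average-case regret is at least this quantity, at least one realization $i^*$ must achieve a regret bound at least as large, proving the theorem. The main obstacle is purely bookkeeping: tracking the absolute constants through the KL/Pinsker step and the optimization over $\varepsilon$ carefully enough to hit $1/20$; everything else is a direct translation of the classical $N$-arm bandit lower bound into the path-learning language, where ``arms'' are identified with the $N$ path experts and the gains are scaled from $[0,1]$ to $[0,U]$.
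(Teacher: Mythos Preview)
Your proposal is correct: it reconstructs the classical multi-armed bandit lower bound of \cite{auer2002nonstochastic}, rescaled to gains in $[0,U]$ and with the $N$ path experts playing the role of the $N$ arms. The paper itself does not give a proof of this theorem; it simply states it as a known result and cites \cite{auer2002nonstochastic}, so your argument is exactly the one the paper is pointing to.
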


\section{Weighted Finite Automata}
\label{app:fst}

In this section, we formally describe several WFA operations relevant
to this paper, as well as their properties.

\subsection{Intersection of WFAs}

The intersection of two WFAs $\scrA_1$ and $\scrA_2$ is a WFA denoted
by $\scrA_1 \circ \scrA_2$ that accepts the set of sequences accepted
by both $\scrA_1$ and $\scrA_2$ and is defined for all $\pi$ by
\[
(\scrA_1 \circ \scrA_2)(\pi) = \scrA_1(\pi) \cdot \scrA_2(\pi).
\]
There exists a standard efficient algorithm for computing the
intersection WFA
\citep{PereiraRiley1997,MohriPereiraRiley1996,Mohri2009}.
States $Q \subseteq Q_1 \times Q_2$ of $\scrA_1 \circ \scrA_2$ are
identified with pairs of states $Q_1$ of $\scrA_1$ and $Q_2$ of
$\scrA_2$, as are the set of initial and final states.  Transitions
are obtained by matching pairs of transitions from each WFA and
multiplying their weights:
\[
\Big(q_1 \stackrel{a \vert \Blue{w_1}}{\longrightarrow} q_1', \; q_2 \stackrel{a \vert \Blue{w_2}}{\longrightarrow} q_2' \Big)
\quad \Rightarrow  \quad (q_1, q_2) \stackrel{a \vert \Blue{w_1 \cdot w_2}}{\longrightarrow} (q_1', q_2').
\]
The worst-case space and time complexity of the intersection of two
deterministic WFAs is linear in the size of the automaton the
algorithm returns.  In the worst case, this can be as large as the
product of the sizes of the WFA that are intersected (i.e.\
$O(|\scrA_1| |\scrA_2|)$.  This corresponds to the case where every
transition of $\scrA_1$ can be paired up with every transition of
$\scrA_2$.  In practice, far fewer transitions can be matched.

Notice that, when both $\scrA_1$ and $\scrA_2$ are deterministic, then
$\scrA_1 \circ \scrA_2$ is also deterministic since there is a unique
initial state (pair of initial states of each WFA) and since there is
at most one transition leaving $q_1 \in Q_1$ or $q_2 \in Q_2$ labeled
with a given symbol $a \in \Sigma$.

\subsection{Weight Pushing}

Given a WFA $\scrW$, the weight pushing algorithm
\citep{Mohri1997,Mohri2009} computes an equivalent stochastic WFA.
The weight pushing algorithm is defined as follows.  For any state $q$
in $\scrW$, let $d[q]$ denote the sum of the weights of all paths from
$q$ to final states:
\[
d[q] = \sum_{\pi \in \scrP(q)}  \Bigg(\prod_{e \in \pi} w(e)\Bigg) \cdot w_f(\text{dest}(\pi)),
\]
where $\scrP(q)$ denotes the set of paths from $q$ to final states in
$\scrW$.  The weights $d[q]$s can be computed be simultaneously for
all $q$s using standard shortest-distance algorithms over the
probability semiring \citep{Mohri2002}.  The weight pushing algorithm
performs the following steps.  For any transition $(q, q') \in E$ such
that $d[q] \neq 0$, its weight is updated as below:
\[
w(q, q') \leftarrow d[q]^{-1} \, w(q,q') \, d[q'].
\]
For any final state $q$, the weight is updated as follows:
\[
w_f(q) \leftarrow w_f(q) \, d[q]^{-1}.
\]
The resulting WFA is guaranteed to preserve the path expert weights
and to be stochastic \citep{Mohri2009}.

\section{Proofs}
\label{app:proofs}

\begin{customlemma}{\ref{lemma:exp3}}
At each round $t \in [T]$, the following properties hold for $\scrW_t$ and $\scrV_t$:
\begin{equation*}
\scrW_{t+1}(\pi) \propto \exp({ \eta \, \sum_{s=1}^t \widetilde{g}_{s, \pi}}), \quad
\scrV_t(\pi) = \exp(\eta \, \widetilde{g}_{t, \pi}),
\quad \text{s.t. } 
\widetilde{ g}_{s,\pi} = 
\begin{cases}
\frac{g_{s,\pi}}{\scrW_s({\pi})} & \pi= \pi_s \\
0 & \text{otherwise.}
\end{cases}
\end{equation*}
\end{customlemma}
\begin{proof}
  Consider $\scrV_t$ in Figure~\ref{fig:updateWFA} and let
  $\pi_t = e_1 e_2 \ldots e_k$ be the path chosen by the learner.
  Every state in $\scrV_t$ is a final state. Therefore, $\scrV_t$
  accepts any sequence of transitions names.  Moreover, since the
  weights of all transitions are $1$, the weight of any accepting path
  is simply the weight of its final state.  The construction of
  $\scrV_t$ ensures that the weight of every sequence of transition
  names is $1$, except for $\pi_t = e_1 e_2 \ldots e_k$.  Thus,
  the property of $\scrV_t$ is achieved:
$$
\scrV_t(\pi) = 
\begin{cases}
\exp\left( \frac{\eta \, g_{t,\pi}}{\scrW_t({\pi})} \right) & \pi= \pi_t \\
1 & \text{otherwise}
\end{cases}
$$

To proof of the result for $\scrW_{t+1}$ is by induction on $t$.
Consider the base case of $t = 0$.  $\scrW_1$ is initialized to the
automaton $\scrA$ with all weights being one.  Thus, the weights of
all paths are equal to $1$ before weight pushing (i.e.\
$\scrW_1(\pi) \propto 1$).  The inductive step is as follows:
\begin{align*}
\scrW_{t+1}(\pi) &\propto \scrW_t(\pi) \cdot \scrV_t(\pi) && \text{(definition of composition)} \\
&=  \exp({ \eta \, \sum_{s=1}^{t-1} \widetilde{g}_{s, \pi}})  \cdot  \exp(\eta \, \widetilde{g}_{t, \pi})  && \text{(induction hypothesis)}\\
&=  \exp({ \eta \, \sum_{s=1}^{t} \widetilde{g}_{s, \pi}}),
\end{align*}
which completes the proof.
\end{proof}

\begin{customlemma}{\ref{lemma:exp3-time}}
The time complexity of \EXPAG\ at round $t$ is in $O(|\scrW_{t}| + |\scrV_t|)$. 
Moreover, in the worst case, the growth of $|\scrW_t|$ over time is at most linear in $K$
where $K$ is the length of the longest path in $\scrA$.
\end{customlemma}
\begin{proof}
  Figure~\ref{alg:exp3} gives the pseudocode of \EXPAG.  The time
  complexity of the weight-pushing step is in $O(|\scrW_t|)$, where
  $|\scrW_t|$ is the sum of the number of states and transitions in
  $\scrW_t$.  Lines $4$ and $6$ in Algorithm~\ref{alg:exp3} take
  $O(|\scrV_t|)$ time.  Finally, regarding line $7$, the worst-case
  space and time complexity of the intersection of two deterministic
  WFA is linear in the size of the automaton the algorithm returns.
  However, the size of the intersection automaton
  $\scrW_t \circ \scrV_t$ is significantly smaller than the general
  worst case (i.e.\ $O(|\scrW_t| |\scrV_t|)$) due to the state
  ``else'' with all in-coming $\rho$-transitions (see
  Figure~\ref{fig:updateWFA}).  Since $\scrW_t$ is deterministic, in
  the construction of $\scrW_t \circ \scrV_t$, each state of $\scrV_t$
  except from the ``else'' state is paired up only with one state of
  $\scrW_t$.  For example, if the state is the one reached by
  $e_1 e_2 e_3$, then it is paired up with the single state of
  $\scrW_t$ reached when reading $e_1 e_2 e_3$ from the initial state.
  Thus $| \scrW_t \circ \scrV_t | \leq |\scrW_{t}| + |\scrV_t|$, and
  therefore, the intersection operation in line $7$ takes
  $O(|\scrW_{t}| + |\scrV_t|)$ time, which also dominates the time
  complexity of \EXPAG\ algorithm.

  Additionally, observe that the size $|\scrV_t|$ is in $O(K)$ where
  $K$ is the length of the longest path in $\scrA$.  Since
  $ |\scrW_{t+1}| = | \scrW_t \circ \scrV_t | \leq |\scrW_{t}| +
  |\scrV_t| $, in the worst case, the growth of $|\scrW_t|$ over time
  is at most linear in $K$.
\end{proof}

\begin{customproposition}{\ref{prop:1-1}}
  Let $\scrA$ be an expert automaton and let $T_\scrA$ be a
  deterministic transducer representing the rewrite
  rules~\eqref{eq:rules}.  Then, for each accepting path $\pi$ in
  $\scrA$ there exists a unique corresponding accepting path $\pi'$ in
  $\scrA' = \Pi ( \scrA \circ T_\scrA )$.
\end{customproposition}

\begin{proof}
  To establish the correspondence, we introduce $T_\scrA$ as a mapping
  from the accepting paths in $\scrA$ to the accepting paths in
  $\scrA'$.  Since $T_\scrA$ is deterministic, for each accepting path
  $\pi$ in $\scrA$ (i.e.\ $\scrA(\pi)=1$), $T_\scrA$ assigns a unique
  output $\pi'$, that is $T_\scrA(\pi, \pi')=1$.  We show that $\pi'$
  is an accepting path in $\scrA'$. Observe that
$$
(\scrA \circ T_\scrA)(\pi, \pi') = \scrA(\pi) \cdot T_\scrA(\pi, \pi') = 1 \times 1 = 1,
$$
which implies that
$\scrA'(\pi') = \Pi ( \scrA \circ T_\scrA )(\pi') = 1$.  Thus for each
accepting path $\pi$ in $\scrA$ there is a unique accepting path
$\pi'$ in $\scrA'$.
\end{proof}

\begin{customthm}{\ref{thm:additivity}}
Given the trial $t$, for each transition $e' \in E'$ in $\scrA'$
define the gain $g_{e'\!,t} := \left[ \Theta(y_t) \right]_i$ if
$\out_t(e') = \theta_i$ for some $i$ and $g_{e'\!,t}:=0$ if no
such $i$ exists.  Then, the gain of each path $\pi$ in $\scrA$ at
trial $t$ can be expressed as an additive gain of $\pi'$ in $\scrA'$:
\begin{equation*}
\forall t \in [1,T], \;
\forall \pi \in \scrP \Colon \quad
\scrU(\out_t(\pi), y_t)
= \sum_{e' \in \pi'} g_{e'\!,t} \;.
\end{equation*}
\end{customthm}

\begin{proof}
By definition, the $i$th component of $\Theta(\text{out}_t(\pi))$ is the number of occurrences of $\theta_i$ in $\text{out}_t(\pi)$.
Also, by construction of the context-dependent automaton based on rewrite rules, 
$\pi'$ contains all path segments of length $|\theta_i|$ of $\pi$ in $\scrA$ as transition labels in $\scrA'$.
Thus every occurrence of $\theta_i$ in $\text{out}_t(\pi)$ will appear as 
a transition label in $\text{out}_t(\pi')$.
Therefore the number of occurrences of $\theta_i$ in $\text{out}_t(\pi)$ is
\begin{equation}
\label{eq:theta_i}
\left[\Theta(\text{out}_t(\pi))\right]_i
= \sum_{e' \in \pi'} \mathbf{1}\{ \text{out}_t(e')=\theta_i \},
\end{equation}
where $\mathbf{1}\{\cdot\}$ is the indicator function. 
Thus, we have that
\begin{align*}
\scrU(\text{out}_t(\pi), y_t)
&= \Theta(y_t) \cdot \Theta(\text{out}_t(\pi)) && (\text{definition of } \scrU)  \\
&= \sum_i \left[\Theta(y_t)\right]_i \left[\Theta(\text{out}_t(\pi))\right]_i \\
&= \sum_i \left[\Theta(y_t)\right]_i \sum_{e' \in \pi'} \mathbf{1}\{ \text{out}_t(e')=\theta_i \} && (\text{Equation (\ref{eq:theta_i})})\\
&= \sum_{e' \in \pi'} \underbrace{\sum_i \left[\Theta(y_t)\right]_i  \mathbf{1}\{ \text{out}_t(e')=\theta_i \}}_{=g_{e'\!,t}},
\end{align*}
which concludes the proof.
\end{proof}

\begin{theorem}
\label{thm:gap-additivity}
Given the trial $t$ and discount rate $\gamma \in [0,1]$, for each transition $e' \in E'$ in $\scrA'$ define the gain $g_{e'\!,t} := \gamma^k \left[ \Theta(y_t) \right]_i$ 
if $\text{out}_t(e') = (\theta_i)_k$ for some $i$ and $k$ 
and $g_{e'\!,t}:=0$ if no such $i$ and $k$ exist.
Then, the gain of each path $\pi$ in $\scrA$ at trial $t$ can be expressed as an additive gain of $\pi'$ in $\scrA'$:
$$
\forall t \in [1,T], \;
\forall \pi \in \scrP \Colon \quad
\scrU(\text{out}_t(\pi), y_t)
=
\sum_{e' \in \pi'} g_{e'\!,t} \; .
$$
\end{theorem}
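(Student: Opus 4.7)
The plan is to mirror the argument of Theorem~\ref{thm:additivity}, with the extra bookkeeping needed to track the gap lengths and their associated discount factors $\gamma^{k}$. The central observation is that the rewrite rules constructed in Section~\ref{sec:gappy-gains} were designed so that the transitions of $\pi'$ are in bijection with the gappy occurrences of patterns along $\pi$, and that this bijection preserves both the pattern identity and the total gap length.

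First, I would unfold the definition of the gappy count vector: by construction,
\[
\big[\Theta(\text{out}_t(\pi))\big]_i = \sum_{\text{gappy occurrences of } \theta_i \text{ in } \text{out}_t(\pi)} \gamma^{k(\text{occ.})},
\]
where $k(\text{occ.})$ denotes the total gap length of the occurrence. Next, I would establish the key bijection: for every path $\pi = e_{j_1} e_{j_2} \cdots e_{j_L}$ in $\scrA$, the rewrite rule with left- and right-hand sides as defined in Section~\ref{sec:gappy-gains} forces the corresponding path $\pi'$ in $\scrA'$ to contain exactly one transition labeled $(\#e_{i_1}\cdots e_{i_n})_k$ for each pair of a pattern-length-$n$ subsequence $(i_1,\ldots,i_n)$ of $(j_1,\ldots,j_L)$ and its enclosing path segment of length $n+k$ with matching endpoints. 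Reading the output gives $\text{out}_t\big((\#e_{i_1}\cdots e_{i_n})_k\big) = \text{out}_t(e_{i_1})\cdots \text{out}_t(e_{i_n})$, so the transitions of $\pi'$ whose output equals $(\theta_i)_k$ are in one-to-one correspondence with the gappy occurrences of $\theta_i$ in $\text{out}_t(\pi)$ with total gap length $k$. This gives the counting identity
\[
\big[\Theta(\text{out}_t(\pi))\big]_i = \sum_{e' \in \pi'} \sum_{k \geq 0} \gamma^{k}\, \mathbf{1}\{ \text{out}_t(e') = (\theta_i)_k \}.
\]

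Plugging this into the definition $\scrU(\text{out}_t(\pi), y_t) = \Theta(y_t) \cdot \Theta(\text{out}_t(\pi))$ and swapping the finite sums over $i$, $k$ and $e' \in \pi'$ yields
\[
\scrU(\text{out}_t(\pi), y_t) = \sum_{e' \in \pi'} \underbrace{\sum_{i,k} \gamma^{k} \big[\Theta(y_t)\big]_i \, \mathbf{1}\{ \text{out}_t(e') = (\theta_i)_k \}}_{=\, g_{e'\!,t}},
\]
which is exactly the additive expression in the statement; for $e'$ whose output does not match any $(\theta_i)_k$, the inner sum is zero, matching the definition of $g_{e'\!,t}$.

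The main obstacle is the bijection in the second step: one must verify that the rewrite rules in Section~\ref{sec:gappy-gains} neither miss any gappy occurrence nor produce spurious duplicates, even though the pattern encoding records only the total gap length rather than the positions of the gaps. This requires checking that for each subsequence $(i_1,\ldots,i_n)$ of transition indices along $\pi$ with fixed endpoints, the simultaneous application of all rules inserts precisely one symbol of the form $(\#e_{i_1}\cdots e_{i_n})_k$ into $\pi'$ (with $k$ the number of omitted transitions between $i_1$ and $i_n$), and that conversely every transition label appearing in $\pi'$ arises this way. Once this combinatorial claim is established, the rest of the argument is a routine rearrangement identical in spirit to the non-gappy proof of Theorem~\ref{thm:additivity}.
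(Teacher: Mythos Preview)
Your proposal is correct and follows essentially the same route as the paper's own proof: both derive the counting identity $\big[\Theta(\text{out}_t(\pi))\big]_i = \sum_{e' \in \pi'} \sum_{k} \gamma^{k}\, \mathbf{1}\{ \text{out}_t(e') = (\theta_i)_k \}$ from the construction of the rewrite rules, then substitute it into the dot-product definition of $\scrU$ and rearrange the finite sums. If anything, you are more explicit than the paper about the bijection step (which the paper dispatches with a one-line ``by construction''), so there is no gap.
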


\begin{proof}
By definition, the $i$th component of $\Theta(\text{out}_t(\pi))$ is the discounted count of gappy occurrences of $\theta_i$ in $\text{out}_t(\pi)$.
Also, by construction of the context-dependent automaton based on rewrite rules, 
$\pi'$ contains all gappy path segments of length $|\theta_i|$ of $\pi$ in $\scrA$ as transition labels in $\scrA'$.
Thus every gappy occurrence of $\theta_i$ with $k$ gaps in $\text{out}_t(\pi)$ will appear as 
a transition label $(\theta_i)_k$ in $\text{out}_t(\pi')$.
Therefore the discounted counts of gappy occurrences of $\theta_i$ in $\text{out}_t(\pi)$ is
\begin{equation}
\label{eq:gap-theta_i}
\left[\Theta(\text{out}_t(\pi))\right]_i
= \sum_{e' \in \pi'} \sum_k \gamma^k \, \mathbf{1}\{ \text{out}_t(e')=(\theta_i)_k \}.
\end{equation}
Therefore, the following holds:
\begin{align*}
\scrU(\text{out}_t(\pi), y_t)
&= \Theta(y_t) \cdot \Theta(\text{out}_t(\pi)) && (\text{definition of } \scrU)  \\
&= \sum_i \left[\Theta(y_t)\right]_i \left[\Theta(\text{out}_t(\pi))\right]_i \\
&= \sum_i \left[\Theta(y_t)\right]_i \sum_{e' \in \pi'} \sum_k \gamma^k \, \mathbf{1}\{ \text{out}_t(e')=(\theta_i)_k \} && (\text{Equation (\ref{eq:gap-theta_i})})\\
&= \sum_{e' \in \pi'} \underbrace{\sum_i \sum_k \gamma^k \, \left[\Theta(y_t)\right]_i  \mathbf{1}\{ \text{out}_t(e')=(\theta_i)_k \}}_{=g_{e'\!,t}},
\end{align*}
and the proof is complete.
\end{proof}

\section{Gains $\scrU$ vs Losses $-\log(\scrU)$}
\label{app:log-reg}

Let $\scrU$ be a non-negative gain function. 
Also let $\pi^* \in \scrP$ be the best comparator over the $T$ rounds.
The regret associated with $\scrU$ and $-\log \scrU$,
which are respectively denoted by $R_G$ and $R_L$,
are defined as below:
\begin{align*}
R_G &:= \sum_{t=1}^T  \scrU(\text{out}_t(\pi^\ast), y_t) - \scrU(\text{out}_t(\pi_t), y_t),  \\
R_L &:= \sum_{t=1}^T -\log(\scrU(\text{out}_t(\pi_t), y_t)) - (-\log(\scrU(\text{out}_t(\pi^\ast), y_t))).
\end{align*}
Observe that if $\scrU(\text{out}_t(\pi_t), y_t) = 0$ for any $t$,
then $R_L$ is unbounded.  Otherwise, let us assume that there exists a
positive constant $\alpha > 0$ such that
$\scrU(\text{out}_t(\pi_t), y_t) \geq \alpha$ for all $t \in [T]$.
Note, for count-based gains, we have $\alpha \geq 1$, since all
components of the representation $\Theta(\cdot)$ are non-negative
integers.  Thus, the next proposition shows that for count-based gains
we have $R_L \leq R_G$.

\begin{proposition}
  Let $\scrU$ be a non-negative gain function.  Assume that there
  exists $\alpha > 0$ such that
  $\scrU(\text{out}_t(\pi_t), y_t) \geq \alpha$ for all $t \in [T]$.
  Then, the following inequality holds:
  $R_L \leq \frac{1}{\alpha} R_G$.
\end{proposition}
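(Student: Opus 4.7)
The plan is to reduce the inequality to a per-round statement and sum over $t$. Writing $a_t := \scrU(\text{out}_t(\pi_t), y_t)$ and $b_t := \scrU(\text{out}_t(\pi^*), y_t)$, both terms in the proposition are sums of per-round contributions: the $t$-th summand of $R_L$ is $-\log a_t + \log b_t = \log(b_t/a_t)$, and the $t$-th summand of $R_G$ is $b_t - a_t$. So it suffices to establish the per-round estimate
\[
  \log(b_t/a_t) \;\le\; \frac{b_t - a_t}{\alpha}
\]
for every $t \in [T]$ and then sum.

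To prove the per-round estimate, I would invoke the elementary inequality $\log x \le x - 1$ (valid for all $x > 0$), which is equivalent to the tangent-line bound coming from the concavity of $\log$. Setting $x = b_t/a_t > 0$ yields
\[
  \log(b_t/a_t) \;\le\; \frac{b_t}{a_t} - 1 \;=\; \frac{b_t - a_t}{a_t}.
\]
Now I would combine this with the hypothesis $a_t \ge \alpha$: when $b_t - a_t \ge 0$, replacing $a_t$ in the denominator by the smaller quantity $\alpha$ only enlarges the right-hand side, giving $(b_t - a_t)/a_t \le (b_t - a_t)/\alpha$ and hence the desired per-round bound. Summing over $t$ then yields $R_L \le \frac{1}{\alpha} R_G$.

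The main obstacle is the sign issue in the final monotonicity step: the inequality $(b_t-a_t)/a_t \le (b_t-a_t)/\alpha$ relies on $b_t \ge a_t$, i.e., on $\pi^*$ doing at least as well as the learner on round $t$. Since $\pi^*$ is the best comparator \emph{in hindsight} (across all $T$ rounds), this need not hold on every individual round, so one has to be careful. The cleanest way to handle this is to interpret the proposition as a bound that is informative precisely when the per-round comparator dominance holds (the regime relevant to converting regret bounds from $\scrU$ to $-\log\scrU$); if a fully round-by-round argument is required, one can split the sum into the ``good'' rounds ($b_t \ge a_t$), where the step above applies directly, and the ``bad'' rounds ($b_t < a_t$), where $-\log a_t + \log b_t \le 0$ and the contribution to $R_L$ is already non-positive, so it can be discarded from $R_L$ without increasing the left-hand side. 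In either case the per-round estimates combine to yield $R_L \le \tfrac{1}{\alpha}R_G$, completing the proof.
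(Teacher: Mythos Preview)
Your approach mirrors the paper's exactly: write the $t$-th summand of $R_L$ as $\log(b_t/a_t)$, bound it by $(b_t-a_t)/a_t$ via $\log(1+x)\le x$, then replace $a_t$ by $\alpha$ in the denominator. You correctly flag the sign issue in this last step---the per-round inequality $(b_t-a_t)/a_t \le (b_t-a_t)/\alpha$ fails whenever $b_t<a_t$ and $a_t>\alpha$---and in fact the paper's own proof simply glosses over exactly this point.

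However, your proposed repair does not work. Splitting into good rounds ($b_t\ge a_t$) and bad rounds ($b_t<a_t$) and discarding the non-positive bad-round contributions from $R_L$ yields $R_L \le \tfrac{1}{\alpha}\sum_{t:\,b_t\ge a_t}(b_t-a_t)$. But $\sum_{t:\,b_t\ge a_t}(b_t-a_t) = R_G - \sum_{t:\,b_t< a_t}(b_t-a_t) \ge R_G$, since the terms you discarded from the right-hand side are negative; so you have bounded $R_L$ by something \emph{at least} $R_G/\alpha$, not at most. Indeed the stated inequality is false in general: with $T=2$, $\alpha=1$, $(a_1,b_1)=(1,e)$, $(a_2,b_2)=(10,9)$ one gets $R_L = 1+\log(9/10)\approx 0.895$ while $R_G = e-2 \approx 0.718$, so $R_L>R_G/\alpha$. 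What your split argument does cleanly establish is $R_L \le \tfrac{1}{\alpha}\sum_t (b_t-a_t)^+$, but that is a strictly weaker conclusion than the proposition asserts.
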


\begin{proof}
The following chain of inequalities hold:
\begin{align*}
R_L 
& = \sum_{t = 1}^T -\log(\scrU(\text{out}_t(\pi_t), y_t)) - (-\log(\scrU(\text{out}_t(\pi^\ast), y_t))) \\
& = \sum_{t = 1}^T \log \bigg[ \frac{\scrU(\text{out}_t(\pi^\ast),
  y_t)}{\scrU(\text{out}_t(\pi_t), y_t)} \bigg] \\
& = \sum_{t = 1}^T \log \bigg[ 1 + \frac{\scrU(\text{out}_t(\pi^\ast),
  y_t) - \scrU(\text{out}_t(\pi_t), y_t)}{\scrU(\text{out}_t(\pi_t),
  y_t)} \bigg]\\
& \leq \sum_{t = 1}^T  \frac{\scrU(\text{out}_t(\pi^\ast), y_t) - \scrU(\text{out}_t(\pi_t), y_t)}{\scrU(\text{out}_t(\pi_t), y_t)} && (\text{since }\log(1+x) \leq x)\\
& \leq \frac{1}{\alpha} \sum_{t = 1}^T  \scrU(\text{out}_t(\pi^\ast), y_t) - \scrU(\text{out}_t(\pi_t), y_t) && (\text{since } \scrU(\text{out}_t(\pi_t),y_t) \geq \alpha) \\
& \leq \frac{1}{\alpha} R_G,
\end{align*}
which completes the proof.
\end{proof}

\section{Non-Additivity of the Count-Based Gains }
\label{app:non-additive-example}
Here we show that the count-based gains are not additive in general.
Consider the special case of $4$-gram-based gains in Figure~\ref{fig:no-add-ex}. 
Suppose the count-based gain defined in Equation~(\ref{eq:gain}) 
can be expressed additively along the transitions (proof by contradiction).
Let the target sequence $y$ be as below:
$$
y = \textbf{He would like to eat cake}
$$

Then the only $4$-gram in Figure~\ref{fig:no-add-ex} with 
positive gain is ``\textbf{He-would-like-to}''. 
Thus, if a path contains this $4$-gram, it will have a gain of $1$.
Otherwise, its gain will be $0$.
Suppose that the transitions are labeled as depicted in Figure~\ref{fig:no-add-ex-param}
and each transition $e \in E$ carries an additive gain of $g(e)$.
Consider the following four paths:
\begin{align*}
&\pi_1 = e_1 e_2 e_3 e_4 e_5 e_6 \\
&\pi_2 = e_1' e_2 e_3 e_4 e_5 e_6 \\
&\pi_3 = e_1 e_2 e_3' e_4 e_5 e_6 \\
&\pi_4 = e_1' e_2 e_3' e_4 e_5 e_6 
\end{align*}

Due to the additivity of gains, we can obtain:
\begin{align*}
\scrU(\text{out}_t(\pi_1),y) + \scrU(\text{out}_t(\pi_4),y)
&=  g(e_1) +g(e_1')+g(e_3) +g(e_3') \\ 
& \quad + 2g(e_2) + 2g(e_4) +2g(e_5) +2g(e_6)  \\
 &= \scrU(\text{out}_t(\pi_2),y) + \scrU(\text{out}_t(\pi_3),y)
\end{align*}

This, however, contradicts the definition of the count-based gains in Equation~(\ref{eq:gain}):
$$
\underbrace{\scrU(\text{out}_t(\pi_1),y)}_{=1} + \underbrace{\scrU(\text{out}_t(\pi_4),y)}_{=0}
\not =
\underbrace{\scrU(\text{out}_t(\pi_2),y)}_{=0} + \underbrace{\scrU(\text{out}_t(\pi_3),y)}_{=0}
$$
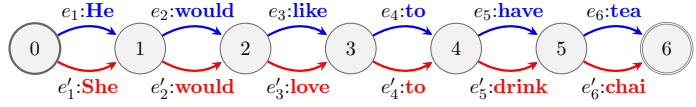
\begin{figure}
\centering
\scalebox{.7}{

\begin{tikzpicture}%
  [>=stealth,
   shorten >=1pt,
   node distance=2cm,
   on grid,
   auto,
   every state/.style={draw=black!60, fill=black!5 }
  ]
  
\tikzset{alg1/.style = {->,> = stealth, very thick, blue}}
\tikzset{alg2/.style = {->,> = stealth, very thick, red}}



\node[state, very thick] (0)                  {0};
\node[state] (1) [right=of 0] {1};
\node[state] (2) [right=of 1] {2};
\node[state] (3) [right=of 2] {3};
\node[state] (4) [right=of 3] {4};
\node[state] (5) [right=of 4] {5};
\node[state, accepting] (6) [right=of 5] {6};

\path[->]
	(0) edge[bend left]	node {$e_1$:\Blue{\textbf{He}}} (1)
		  edge[bend right]	node [below] {$e_1'$:\Red{\textbf{She}}} (1)
	(1) edge[bend left]	node {$e_2$:\Blue{\textbf{would}}} (2)
		  edge[bend right]	node [below] {$e_2'$:\Red{\textbf{would}}} (2)
	(2) edge[bend left]	node {$e_3$:\Blue{\textbf{like}}} (3)
		  edge[bend right]	node [below] {$e_3'$:\Red{\textbf{love}}} (3)
	(3) edge[bend left]	node {$e_4$:\Blue{\textbf{to}}} (4)
		  edge[bend right]	node [below] {$e_4'$:\Red{\textbf{to}}} (4)
	(4) edge[bend left]	node {$e_5$:\Blue{\textbf{have}}} (5)
		  edge[bend right]	node [below] {$e_5'$:\Red{\textbf{drink}}} (5)
	(5) edge[bend left]	node {$e_6$:\Blue{\textbf{tea}}} (6)
		  edge[bend right]	node [below] {$e_6'$:\Red{\textbf{chai}}} (6)
;

\draw[alg1] (0) [bend left] to (1);
\draw[alg1] (1) [bend left] to (2);
\draw[alg1] (2) [bend left] to (3);
\draw[alg1] (3) [bend left] to (4);
\draw[alg1] (4) [bend left] to (5);
\draw[alg1] (5) [bend left] to (6);

\draw[alg2] (0) [bend right] to (1);
\draw[alg2] (1) [bend right] to (2);
\draw[alg2] (2) [bend right] to (3);
\draw[alg2] (3) [bend right] to (4);
\draw[alg2] (4) [bend right] to (5);
\draw[alg2] (5) [bend right] to (6);

\end{tikzpicture}
}
\caption{Additive gains carried by the transitions.}
\label{fig:no-add-ex-param}
\end{figure}

\end{document}